\documentclass{article}

\usepackage{microtype}
\usepackage{graphicx}
\usepackage{subcaption}
\usepackage{booktabs} % for professional tables
\usepackage{makecell}
\usepackage{algpseudocode}
\usepackage{booktabs}
\usepackage{hyperref}
\usepackage[normalem]{ulem}

\usepackage[accepted]{icml2026}

\usepackage{amsmath}
\usepackage{amssymb}
\usepackage{mathtools}
\usepackage{amsthm}
\usepackage{xcolor}
\usepackage{fontawesome5}

\usepackage[capitalize,noabbrev]{cleveref}

\theoremstyle{plain}
\newtheorem{theorem}{Theorem}[section]
\newtheorem{proposition}[theorem]{Proposition}

\theoremstyle{definition}

\theoremstyle{remark}

\usepackage[textsize=tiny]{todonotes}

\definecolor{linkblue}{RGB}{25, 55, 160}

\newcommand{\projectlink}[3]{%
  \href{#1}{\textcolor{linkblue}{#2\ #3}}%
}

\begin{document}

\twocolumn[
\icmltitle{OmniRAG-Agent: Agentic Omnimodal Reasoning for Low-Resource Long Audio-Video Question Answering}

\vspace{0.1em}
\centerline{\textbf{Yifan Zhu}$^{1}$ \quad \textbf{Xinyu Mu}$^{5}$ \quad \textbf{Tao Feng}$^{3}$ \quad \textbf{Zhonghong Ou}$^{1}$}
\vspace{0.05em}
\centerline{\textbf{Yuning Gong}$^{4}$ \quad \textbf{Haoran Luo}$^{2,\dagger}$}

\vspace{0.2em}
\centerline{$^{1}$Beijing University of Posts and Telecommunications \quad $^{2}$Nanyang Technological University \quad
$^{3}$Tsinghua University}
\centerline{
$^{4}$National University of Singapore \quad
$^{5}$Beijing Information Science and Technology University}

\vspace{0.1em}
\centerline{\texttt{yifan\_zhu@bupt.edu.cn, 2022011260@bistu.edu.cn, haoran.luo@ieee.org}}

\vspace{0.05em}
\centerline{
\projectlink{https://jackefn.github.io/OmniRAG-Agent-Home/}{\faHome}{Homepage}
\quad
\projectlink{https://jackefn.github.io/OmniRAG-Agent-Home/demo.html}{\faPlayCircle}{Demo}
\quad
\projectlink{https://github.com/jackefn/OmniRAG-Agent.git}{\faGithub}{GitHub}
\quad
\projectlink{https://huggingface.co/datasets/JackMuX3Y/OmniRAG-Agent}{\faDatabase}{Dataset}
}

\vskip 0.15in
]
\makeatletter
\renewcommand{\thefootnote}{\fnsymbol{footnote}}
\setcounter{footnote}{2}
\makeatother
\footnotetext{Corresponding author.}
% this must go after the closing bracket ] following \twocolumn[ ...

% This command actually creates the footnote in the first column listing the
% affiliations and the copyright notice. The command takes one argument, which
% is text to display at the start of the footnote. The \icmlEqualContribution
% command is standard text for equal contribution. Remove it (just {}) if you
% do not need this facility.

% Use ONE of the following lines. DO NOT remove the command.
% If you have no special notice, KEEP empty braces:
% \printAffiliationsAndNotice{}  % no special notice (required even if empty)
% Or, if applicable, use the standard equal contribution text:
% \printAffiliationsAndNotice{\icmlEqualContribution}

\begin{abstract}
  Long-horizon omnimodal question answering answers questions by reasoning over text, images, audio, and video. Despite recent progress on OmniLLMs, low-resource long audio-video QA still suffers from costly dense encoding, weak fine-grained retrieval, limited proactive planning, and no clear end-to-end optimization.To address these issues, we propose OmniRAG-Agent, an agentic omnimodal QA method for budgeted long audio-video reasoning. It builds an image–audio retrieval-augmented generation module that lets an OmniLLM fetch short, relevant frames and audio snippets from external banks. Moreover, it uses an agent loop that plans, calls tools across turns, and merges retrieved evidence to answer complex queries. Furthermore, we apply group relative policy optimization to jointly improve tool use and answer quality over time. Experiments on OmniVideoBench, WorldSense, and Daily-Omni show that OmniRAG-Agent consistently outperforms prior methods under low-resource settings and achieves strong results, with ablations validating each component.
\end{abstract}

\section{Introduction}

\begin{figure}[htbp]
    \centering 
    \includegraphics[width=\columnwidth]{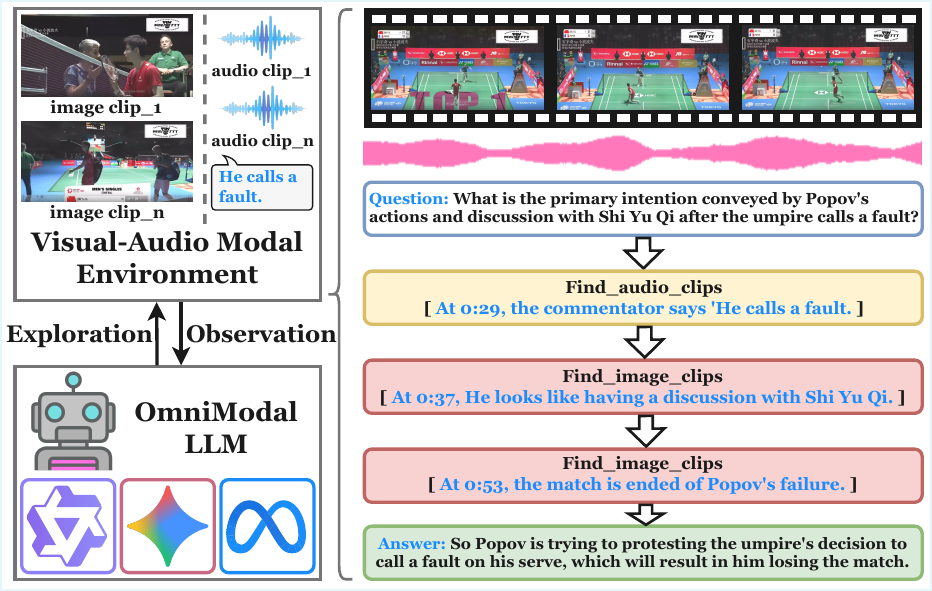}
    \caption{An example of OmniRAG-Agent interacting with a visual–audio modal environment. The agent answers a long-horizon question by iteratively retrieving relevant audio clips and image clips from external banks and reasoning step by step.}
    \label{fig:fig_1}
\end{figure}

Omnimodal large language models (OmniLLMs)~\cite{xu2025qwen2,zhao2025humanomni,comanici2025gemini} unify perception and generation over text, images, audio, and video, and have become a fundamental capability for embodied agents, intelligent assistants, and multimodal foundation models~\cite{zhao2025r1,long2025seeing}. In real-world applications, long audio and video inputs are often needed because the clues for a question are usually scattered across several minutes and must be pieced together from both what is seen and what is heard to produce a reliable answer~\cite{kulkarni2025avatar,team2025longcat}. However, most existing OmniLLMs and omnimodal QA systems are not built for long audio and video under tight budgets: encoding frames and audio segments densely quickly becomes too expensive in compute and memory~\cite{jiang2025specific}, which makes low-resource long-horizon audio-video reasoning an increasingly important research direction.

With the emergence and rapid progress of OmniLLMs, two main types of approaches have appeared for long-horizon audio-video question answering. On the one hand, API-based Omni systems call multiple specialized models to analyze video and audio step by step, and then combine the intermediate results to answer complex questions~\cite{tao2025omniagent}. On the other hand, fine-tuning-based Omni systems often rely on training strategies such as reinforcement learning~\cite{rafailov2023direct,schulman2017proximal,yu2025dapo} to optimize an OmniLLM, making it better at spotting small but important details in audio and video~\cite{zhong2025omni}. Overall, both methods can significantly improve OmniLLMs for long-horizon audio-video reasoning.

\begin{figure*}[htbp]
  \centering
  \includegraphics[width=1.0\textwidth]{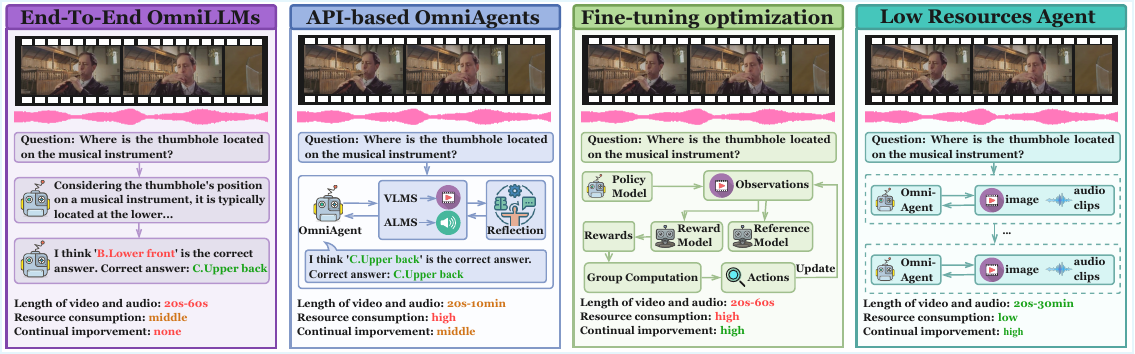}
  \caption{Comparison of different approaches for long-horizon audio-video QA under low-resource constraints: end-to-end OmniLLMs, API-based OmniAgents, LLM fine-tuning optimization, and our low-resource agent training framework with image–audio clip retrieval.}
  \label{fig:fig_2}
\end{figure*}

However, three main challenges remain. \textbf{(1) Limited fine-grained evidence retrieval under tight budgets.} For API-based Omni systems~\cite{xu2025qwen3omnitechnicalreport}, this often means running large multimodal models over raw audio-video for broad analysis, which quickly becomes expensive and still does not reliably find the exact short segments that matter. \textbf{(2) Limited proactive planning in single-model reasoning.} For fine-tuning-based Omni systems, training is usually done on short clips due to compute constraints, so the resulting model does not naturally generalize to long-horizon inputs where broader coverage and smarter selection are needed. \textbf{(3) Lack of effective end-to-end training and optimization mechanisms.} Most current pipelines optimize retrieval and answering separately, so they cannot directly learn better evidence selection and better final answers from overall performance, making it hard to steadily improve long-horizon audio-video reasoning over time.

To address these challenges, we propose OmniRAG-Agent, a new heuristic omnimodal QA approach that can interact with an image bank and an audio bank, as shown in Figure~\ref{fig:fig_1}. First, we build an image-audio retrieval-augmented generation (RAG) framework~\cite{luo2025kbqa,xiammed,wangretrieval}, where an OmniLLM can call retrieval tools to obtain fine-grained evidence and strengthen omnimodal reasoning in real time. Moreover, we introduce an agentic reasoning framework~\cite{li2025iterative,wengwe,leispider} that lets the OmniLLM think on its own and make multiple tool calls across turns, which helps solve complex questions that need information from different audio-visual angles. Furthermore, we apply a reinforcement learning method, group relative policy optimization (GRPO)~\cite{zheng2025group}, to keep improving the OmniLLM over time.

We conduct extensive experiments on three long-horizon omnimodal QA benchmarks, OmniVideoBench~\cite{li2025omnivideobench}, WorldSense~\cite{hong2025worldsense}, and Daily-Omni~\cite{zhou2025daily}. Experimental results demonstrate that our approach consistently outperforms existing methods (as illustrated in Figure~\ref{fig:fig_2} and Table~\ref{tab:method_comparison}) under low-resource settings, achieving state-of-the-art performance across multiple evaluation metrics. Further analysis and ablation studies confirm the effectiveness of each proposed component, highlighting the robustness and scalability of our framework for real-world long-duration omnimodal reasoning tasks. In addition, our RAG method and multi-round retrieval mechanism support multiple open-source models, including Qwen2.5-Omni~\cite{xu2025qwen2}, Qwen3-Omni~\cite{xu2025qwen3omnitechnicalreport}, making it a promising plug-and-play solution.

\section{Related Work}

\textbf{Omnimodal large language models.} Research on omnimodal models extends earlier text-only LLMs~\cite{achiam2023gpt} toward richer multimodal understanding. While VLMs~\cite{yang2023dawn,bai2025qwen3vltechnicalreport} mainly combine vision and language, omnimodal large language models (OmniLLMs)~\cite{hu2024minicpm,li2025baichuan,xing2025echoink} further incorporate audio to support unified perception and generation. For example, Qwen3-Omni~\cite{xu2025qwen3omnitechnicalreport} enables end-to-end speech input and output, and HumanOmni~\cite{zhao2025humanomni} focuses on human-centric audio-visual understanding. Meanwhile, closed-source models such as Gemini ~\cite{team2023gemini} have also shown strong audio-video capabilities for real-world interaction.

\begin{table*}[ht]
\caption{Comparison of methods for long-horizon omnimodal QA. These five dimensions indicate a method’s ability to solve long-horizon video-audio question-answering tasks.}
\centering
\small
\begin{tabular}{l|c|c|c|c|c}
\Xhline{1pt}
\textbf{Method} & \textbf{Modality} & \textbf{Max Input Length} & \textbf{Agentic} & \textbf{Deployment Cost} & \textbf{Continual Learning} \\
\hline
Qwen2.5-Omni & Audio + Vision & 60s & No & 60GB VRAM & None \\
\hline
VideoLLama2.1 & Audio + Vision & 60s & No & 60GB VRAM & None \\
\hline
Qwen3-VL & Vision Only & 5min & No & 40GB VRAM & None \\
\hline
XGC-AVis & Audio + Vision & 10min & Yes & Higher token budget & Low \\
\hline
OmniAgent & Audio + Vision & 30min & Yes & Higher token budget & Low \\
\hline
\textbf{OmniRAG-Agent (Ours)} & Audio + Vision & 30min & Yes & 15GB VRAM & High \\
\Xhline{1pt}
\end{tabular}
\label{tab:method_comparison}
\end{table*}

\textbf{VLM-powered agents.} In recent years, with the progress of VLMs, many studies have explored agentic approaches that allow VLMs to actively browse videos and gather key information for question answering~\cite{wang2025videotree,zhang2024simple,jeoung2024adaptive}. Meanwhile, other methods focus on designing multiple tool modules, enabling VLMs to select and call appropriate tools as needed~\cite{fan2024videoagent,min2024morevqa,liu2025videomind}. In addition, some works rely on predefined workflows to guide VLMs through structured exploration, which helps improve long-video understanding in complex real-world scenarios~\cite{yuan2025videodeepresearch,shu2025audio,wang2025active}.

\section{Preliminaries}
\textbf{Definition 1: Retrieval Knowledge Base.}
A retrieval knowledge base is defined as two indexed multi-modal repositories,
$B=\{B^{img}, B^{aud}\}$.
The image bank is given by $B^{img}=\{(x_i^{img}, m_i^{img})\}_{i=1}^{N}$, where each image item $x_i^{img}$ is paired with metadata $m_i^{img}$.
Likewise, the audio bank is given by $B^{aud}=\{(x_j^{aud}, m_j^{aud})\}_{j=1}^{M}$, where each ASR transcript $x_j^{aud}$ is paired with metadata $m_j^{aud}$.

\textbf{Definition 2: Tool-Calling Program.}
A tool-calling program is a stepwise list of tool invocations, $\Pi=[u_t]_{t=1}^{T}$, where each invocation $u_t$ is selected from an external tool set $U$ that can interact with an OmniLLM and returns an observation $o_t$.
In OmniRAG-Agent, $U$ includes two types of retrieval tools, namely audio retrieval and image retrieval, $U=\{\textsc{RetrieveIMG}(\cdot), \textsc{RetrieveAUD}(\cdot)\}$.

\textbf{Problem Statement.}
In OmniRAG-Agent, given a natural language question $Q$, an input audio-video stream $X$, and $B$, the goal is to interact with $B$ via $U$ to obtain fine-grained evidence and then generate the final response to $Q$.
\begin{figure*}[htbp]
  \centering
  \includegraphics[width=1.0\textwidth]{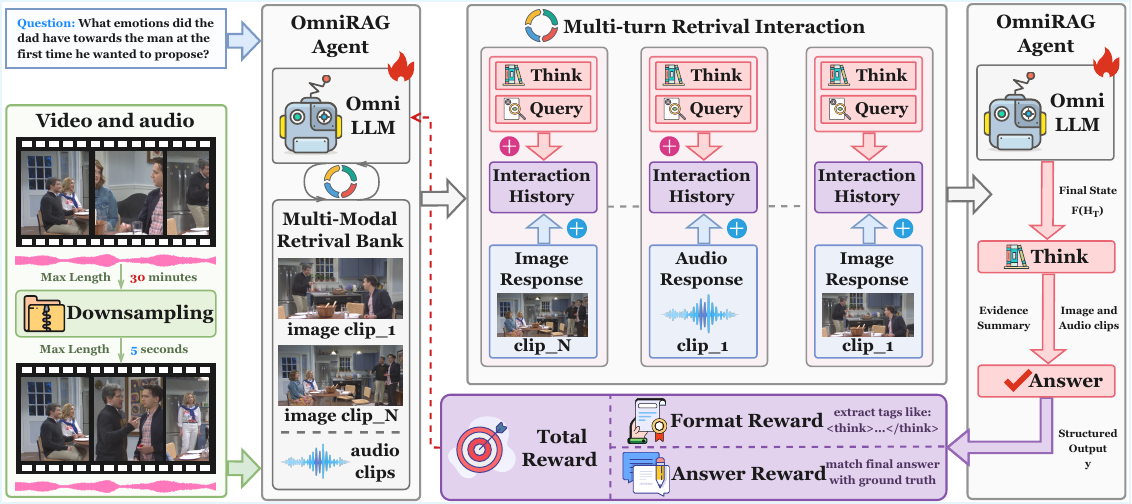}
  \caption{An overview of the OmniRAG-Agent framework. An OmniLLM interacts with a multi-modal retrieval environment to answer long-horizon audio-video questions through multi-turn retrieval. The image and audio retrieval banks are plug-and-play.}
  \label{fig:fig_3}
\end{figure*}
\section{Method: OmniRAG-Agent}
In this section, we introduce OmniRAG-Agent(see Figure~\ref{fig:fig_3}). It includes a multi-modal RAG framework, a multi-turn reasoning process, and a RL-based training method.

\subsection{Multi-Modal RAG}
OmniRAG-Agent build multi-modal bank from the original video, let the model retrieve the most relevant snippets.

\textbf{Multi-Modal Bank Construction.} Since OmniLLMs have a limited maximum input length for videos, we first apply a temporal downsampling operator $\mathcal{D}_t(\cdot)$ to $X$, obtaining a compressed stream $\tilde{X}$ with duration $t$ seconds:
$\tilde{X}=\mathcal{D}_t(X)$. However, such downsampling loses fine-grained information. To recover these clues, we build $B$ from $X$. 

For $B^{img}=\{(x_i^{img}, m_i^{img})\}_{i=1}^{N}$, we sample frame images $\{x_i^{img}\}_{i=1}^{N}$ from $X$ at a fixed interval $\Delta$ seconds. We extract
\begin{equation}
x_i^{img} = \mathrm{Frame}(X,\; (i-1)\Delta), \qquad 
N = \left\lfloor \frac{T_X}{\Delta} \right\rfloor + 1,
\end{equation}
where $T_X$ denotes the overall total time duration of $X$.

For $B^{aud}=\{(x_j^{aud}, m_j^{aud})\}_{j=1}^{M}$, we apply an ASR~\cite{an2025fun} module $\mathrm{ASR}(\cdot)$ to the audio track of $X$ to obtain a sequence of time-stamped segments. Specifically,
\begin{equation}
\{(x_j^{aud}, m_j^{aud})\}_{j=1}^{M} = \mathrm{ASR}(X),
\end{equation}
here $M$ is the number of ASR segments produced from $X$.

\textbf{Retrieving Evidence from the Banks.} During reasoning, the OmniLLM generates natural-language retrieval queries and interacts with $B$ through $U$. At round $t$, the model produces $q_t^{img}$ for image retrieval or $q_t^{aud}$ for audio retrieval.

For image retrieval, given $q_t^{img}$, we encode it with the CLIP~\cite{radford2021learning} text encoder $\phi_{txt}(\cdot)$, where $\phi_{img}(\cdot)$ denotes the corresponding CLIP image encoder, and retrieve the top-$K_{img}$ items from the image bank:
\begin{equation}
s_i^{img}=\operatorname{sim}\!\big(\phi_{txt}(q_t^{img}),\,\phi_{img}(x_i^{img})\big),
\end{equation}
\begin{equation}
\begin{aligned}
E_t^{img}
&=\operatorname{TopK}\!\left(\{(x_i^{img},m_i^{img},s_i^{img})\}_{i=1}^{N},\,K_{img}\right) \\
&=\left\{(x_i^{img},m_i^{img}) \;\middle|\; i\in \mathcal{I}_t^{img}\right\}
\subset B^{img},
\end{aligned}
\end{equation}
where $\operatorname{sim}(\cdot,\cdot)$ denotes a similarity function and $\mathcal{I}_t^{img}$ is the index set of the top-$K_{img}$ scores $\{s_i^{img}\}_{i=1}^{N}$. The retrieved set $E_t^{img}$ provides the timestamps in $m_i^{img}$ that are most likely to contain salient visual evidence relevant to $Q$.

For audio retrieval, given $q_t^{aud}$, we encode it with $\phi_{txt}(\cdot)$ and retrieve the top-$K_{aud}$ items from the audio bank:
\begin{equation}
s_j^{aud}=\operatorname{sim}\!\big(\phi_{txt}(q_t^{aud}),\,\phi_{txt}(x_j^{aud})\big),
\quad
\end{equation}
\begin{equation}
\begin{aligned}
E_t^{aud}
&=\operatorname{TopK}\!\left(\{(x_j^{aud},m_j^{aud},s_j^{aud})\}_{j=1}^{M},\,K_{aud}\right) \\
&=\left\{(x_j^{aud},m_j^{aud}) \;\middle|\; j\in \mathcal{I}_t^{aud}\right\}
\subset B^{aud},
\end{aligned}
\end{equation}
where $\mathcal{I}_t^{aud}$ is the index set of the top-$K_{aud}$ scores $\{s_j^{aud}\}_{j=1}^{M}$. Since $B^{aud}$ is indexed by $\{x_j^{aud}\}_{j=1}^{M}$, this step performs natural-language retrieval over speech content, and the metadata $m_j^{aud}$ provides the corresponding time ranges.
\begin{proposition}
  Multi-modal RAG improves the agent’s ability to solve problems.
\end{proposition}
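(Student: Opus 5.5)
To make the proposition precise, I will formalize ``ability to solve problems'' as the optimal expected answer quality achievable by the agent. Let $r(\hat y, y^\ast)\in[0,1]$ be the answer reward and $\pi$ range over policies of the OmniLLM. Without retrieval, the policy sees only $(Q,\tilde X)$ with $\tilde X=\mathcal{D}_t(X)$. With multi-modal RAG, it sees $(Q,\tilde X, o_1,\dots,o_T)$, where each $o_t\in\{E_t^{img},E_t^{aud}\}$ is produced by a tool in $U$. Define
\[
J_{\mathrm{base}}=\max_{\pi}\ \mathbb{E}\big[r(\pi(Q,\tilde X),y^\ast)\big],
\qquad
J_{\mathrm{RAG}}=\max_{\pi}\ \mathbb{E}\big[r(\pi(Q,\tilde X,\Pi,o_{1:T}),y^\ast)\big].
\]
The claim to prove is $J_{\mathrm{RAG}}\ge J_{\mathrm{base}}$, with strict inequality under an informativeness condition.

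\textbf{Weak inequality.} I will argue by policy embedding. The tool-calling program of Definition 2 may be empty ($T=0$), and a policy may also ignore its observations. Hence every baseline policy is a special case of a RAG policy, the feasible set only grows, and the maximum cannot decrease. Equivalently, this is a data-processing or information-monotonicity argument: the Bayes risk of predicting $y^\ast$ from $(Q,\tilde X,o_{1:T})$ is at most the Bayes risk from $(Q,\tilde X)$ alone, because conditioning on more information never increases the optimal expected loss.

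\textbf{Strict improvement.} I need a condition that the downsampling step actually loses something the banks recover. I will assume that, with positive probability, the evidence needed for $y^\ast$ lies in a frame $x_i^{img}$ or transcript $x_j^{aud}$ that is not recoverable from $\tilde X$. I will also assume the CLIP similarity ranks that item within the top-$K$ for some query the policy can generate, i.e.\ $i\in\mathcal{I}_t^{img}$ or $j\in\mathcal{I}_t^{aud}$. On that event, a RAG policy that issues this query and reads the timestamps in $m^{img}$ or $m^{aud}$ achieves a strictly higher conditional reward than any baseline policy. Off that event, it falls back to the baseline action. Taking expectations then gives $J_{\mathrm{RAG}}>J_{\mathrm{base}}$.

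\textbf{Main obstacle.} The hard part is that the proposition is really an empirical claim, and the strict version depends on retrieval recall, which nothing proved earlier in the paper guarantees. I would therefore state the recall condition explicitly as a hypothesis. I would also note that the learned (GRPO-trained) policy only approximates the maximizer, so the optimality argument guarantees improvement only at the optimum, and the experiments are needed to confirm it holds in practice.
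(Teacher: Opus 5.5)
Your argument is correct within the formalization you chose, but it takes a different route from the paper. The paper never compares optimal policies. It posits a small set $E^\star$ of necessary evidence and lets $G$ be the event that some retrieved $E_t$ meets $E^\star$, with $P(G)=r>0$. It then assumes $\alpha_1=P(\hat{Y}=Y^\star\mid G)>\alpha_0=P(\hat{Y}=Y^\star\mid\neg G)$ and asserts that the no-retrieval baseline is ``confined to the $\neg G$ regime,'' i.e.\ $\mathbb{E}[\mathrm{Acc}_{\mathrm{Base}}]\le\alpha_0$. The law of total probability then yields the explicit gain $r(\alpha_1-\alpha_0)>0$. A separate step bounds the two-bank hit probability $1-(1-r_{img})(1-r_{aud})$ below by $\max(r_{img},r_{aud})$.

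The paper's route buys an interpretable quantitative bound (hit rate times usefulness of evidence) stated for the actual agent. However, it rests on the underived assumption $\mathbb{E}[\mathrm{Acc}_{\mathrm{Base}}]\le\alpha_0$, which says retrieval never hurts when it misses; that fails if irrelevant snippets distract the model. Your embedding/information-monotonicity argument proves that ``never hurts'' direction with no assumption. The price, which you correctly flag, is that it speaks only about the optimum, not the trained OmniLLM. It is essentially the one-step version of the posterior-martingale/concavity argument the paper uses for Proposition 4.2.

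Your argument also gives the multi-modal part for free. A policy with both tools in $U$ can restrict itself to one, so the optimum is monotone in the tool set with no assumption on how the channels are coupled. You should say this explicitly, since it is the content of the paper's extra step.

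Two small tightenings in your strict step. First, under 0--1 reward, extra information helps strictly only if it changes the Bayes-optimal answer with positive probability. So state ``not recoverable from $\tilde X$'' as: with positive probability the retrieved item determines $y^\ast$ while the Bayes guess $y_0(Q,\tilde X)$ is wrong. Second, instead of a policy that ``falls back off the event'' (which presupposes it can detect the event), take the Bayes-optimal decision given $(Q,\tilde X,o_{1:T})$. Writing the baseline accuracy as $\mathbb{E}[P(y^\ast=y_0\mid Q,\tilde X,o_{1:T})]$ then shows the gain is at least the probability of that event, which is your counterpart of the paper's $r(\alpha_1-\alpha_0)$.
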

\begin{proof}
  We provide experimental results in Sections~\ref{sec:main_results}--\ref{sec:generalization} and theoretical justification in Appendix~\ref{sec:prop_1}.
\end{proof}

\subsection{Multi-Turn Reasoning Process}
OmniRAG-Agent supports multi-turn retrieval interactions between the agent and a multimodal bank.

\textbf{Agent Initialization.} Our method adopts a  multi-turn tool-calling framework. The initialization is as follows:
\par\hspace*{1em}(i) Environment $L$. In OmniRAG-Agent, $B$ together with $U$ is treated as the environment. At round $t$, the environment returns a compact evidence observation $o_t$ conditioned on the current history $H_{t-1}$ and the retrieval queries:
\begin{equation}
o_t = E_t \sim B(\cdot \mid H_{t-1}, q_t).
\end{equation}
$o_t$ and $q_t$ are appended to update the interaction history:
\begin{equation}
H_t = H_{t-1}\oplus (q_t, E_t),
\end{equation}
where $\oplus$ denotes appending the new query. At the initial, we set $H_0=[Q,\tilde X,a_{\text{tmpl}}]$, $a_{\text{tmpl}}$ is a fixed prompt template that specifies the tool-calling format and stopping rules.
\par\hspace*{1em}(ii) Agent $\mathcal{M}$. The OmniLLM $\mathcal{M}$ acts as the agent. Given $Q$, $\tilde X$, and $a_{\text{tmpl}}$, the agent first produces the first-round planning trace $z_1$ and the initial retrieval queries:
\begin{equation}
z_1,\;q_1 \sim \pi_\theta(\cdot\mid Q,\tilde X,a_{\text{tmpl}}).
\end{equation}
Then, $q_1$ are sent to the $L$ via the retrieval tools, and the returned observation $o_1=E_1$ is combined to form the first-round history $H_1$ for the next round of multi-turn reasoning.

\textbf{The Agent State Space $\mathcal{H}$.}
The agent state $h_t$ is defined by the multi-turn tool-calling interaction history. At $t$, $\mathcal{M}$ issues $q_t$, receives $o_t$ from $L$, and updates its internal trace $z_t$. We define the state space and its evolution as follows:
\par\hspace*{1em}(i) Initial State ($h_0$). The initial state is constructed from $Q$, $\tilde X$, and $a_{\text{tmpl}}$. Specifically, we set it as:
\begin{equation}
h_0 = [\, Q \oplus \tilde X \oplus a_{\text{tmpl}} \,],
\end{equation}
After the first-round planning and tool calls, the first-round state $h_1$ additionally includes the planning trace $z_1$, the issued queries, and the returned retrieved evidence:
\begin{equation}
h_1 = [\, h_0 \oplus z_1 \oplus (q_1, E_1) \,].
\end{equation}

\par\hspace*{1em}(ii) State Update ($h_t$). From $t\ge 2$, the state update depends on the previous state $h_{t-1}$ and the current round interaction, including $z_t$, $q_t$, and observations, like:
\begin{equation}
h_t = [\, h_{t-1} \oplus z_t \oplus (q_t, E_t) \,].
\end{equation}

\par\hspace*{1em}(iii) State Representation ($F(h_t)$). Since the raw history $h_t$ can grow with the number of rounds, we further maintain a compact state representation $F(h_t)$ that summarizes the complete interaction history. At $t$, $F(h_t)$ is updated by combining the previous representation $F(h_{t-1})$ with the current round’s planning trace and retrieved evidence:
\begin{equation}
F(h_t) = S_t\!\left(F(h_{t-1}),\, z_t,\, E_t\right),
\end{equation}
where $S_t(\cdot)$ denotes a temporal summarization operator. 

\textbf{The Agent Action Space.} At $t$, $\mathcal{M}$ takes an action $a_t$ based on $F(h_{t-1})$. The action includes (i) a short plan for what to find next, (ii) the retrieval queries sent to the tools, and (iii) a stop or continue decision. We define it as:
\begin{equation}
a_t = \big(z_t,\; q_t,\; c_t\big),
\end{equation}
$c_t$ indicates whether to still continue the interaction.

$\mathcal{M}$ follows a stochastic policy $\pi_\theta$ and splits the action probability: generating the queries, and deciding when to stop:
\begin{equation}
\begin{aligned}
\log \pi_\theta(a_t \mid F(h_{t-1})) \;=\;&
\log \pi_\theta\!\left(z_t, q_t \mid F(h_{t-1})\right) \\
&+\; \log \pi_\theta\!\left(c_t \mid F(h_{t-1}), z_t, o_t\right).
\end{aligned}
\end{equation}

\textbf{The Agent Target $(h_T, F(h_T), y)$.} After multi-turn interactions with $L$, $\mathcal{M}$ outputs the final answer to $Q$.
\par\hspace*{1em}(i) Final State. The interaction ends at $T$ when the stopping decision is made or when the maximum number of rounds is reached. The final answer is given by:
\begin{equation}
y = \arg\max_{y\in \mathcal{V}^\ast}\; \pi_\theta\!\left(y \mid Q, a_{\text{tmpl}}, H_T\right),
\end{equation}
where $\mathcal{V}^\ast$ denotes the space of output token sequences and $y$ is the final predicted answer produced by $\mathcal{M}$.

\par\hspace*{1em}(ii) Final Distribution.
The joint distribution of the multi-turn tool-calling trajectory and the final answer is:
\begin{equation}
\begin{aligned}
P_\theta(\tau,y \mid Q,a_{\text{tmpl}},L)
&=
\prod_{t=1}^{T}
\pi_\theta(a_t \mid F(h_{t-1})) \\
&\qquad\;\;
L(o_t \mid H_{t-1}, q_t) \\
&\qquad\;\;
\cdot
\pi_\theta(y \mid Q,a_{\text{tmpl}},H_T),
\end{aligned}
\end{equation}
where $\tau=\{(a_t,o_t)\}_{t=1}^{T}$ is the interaction trajectory.

\begin{proposition}
Multi-turn interaction improves the agent's ability to complete long-horizon tasks.
\end{proposition}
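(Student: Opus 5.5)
The plan is to formalize "ability to complete long-horizon tasks" as the optimal expected success (or answer accuracy) achievable by policies in a class, and then show that the multi-turn class weakly dominates the single-turn class, with strict improvement under a mild informativeness condition. Let $R(y,Q)\in[0,1]$ be the task reward, and for a horizon budget $T_{\max}$ let
\[
J_T(\theta)=\mathbb{E}_{(\tau,y)\sim P_\theta(\cdot\mid Q,a_{\text{tmpl}},L)}\big[R(y,Q)\big],
\]
with $T\le T_{\max}$ rounds, using the joint distribution of trajectory and answer defined just above. Write $\Pi_T$ for the set of policies that interact for at most $T$ rounds. The first step is the inclusion $\Pi_1\subseteq\Pi_T$. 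Any single-turn policy is realized by a multi-turn policy that sets $c_1=\text{stop}$, because the action factorization $a_t=(z_t,q_t,c_t)$ includes the stop decision. It follows immediately that $\sup_{\pi\in\Pi_T}J\ge\sup_{\pi\in\Pi_1}J$. This monotonicity is the weak form of the proposition.

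The second step is strict improvement, argued information-theoretically. Let $Y^\ast$ be the correct answer. Since $H_t=H_{t-1}\oplus(q_t,E_t)$, the chain rule gives
\[
I(Y^\ast;H_T)=I(Y^\ast;H_1)+\sum_{t=2}^{T}I(Y^\ast;E_t\mid H_{t-1},q_t),
\]
so the information about the answer is nondecreasing in $T$. By Fano's inequality, or more directly by the fact that the Bayes error $\min_y \Pr(y\ne Y^\ast\mid H_T)$ cannot increase when we condition on more, the optimal final answer $y=\arg\max \pi_\theta(y\mid Q,a_{\text{tmpl}},H_T)$ achieves accuracy at least as high as with $H_1$. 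To get strict improvement, I would model long-horizon questions as requiring evidence from $k\ge2$ disjoint temporal segments: clues scattered across minutes and across modalities, as the introduction motivates. A single round returns only $K_{img}+K_{aud}$ items chosen by one query, and the query $q_1$ is conditioned only on the downsampled $\tilde X$. Under an assumption that one query cannot jointly rank all $k$ required segments into the top-$K$ with probability above some $p<1$, I would then argue that an adaptive policy can issue $q_t$ conditioned on $E_{1:t-1}$ to target the missing segments. This drives the success probability toward $1-(1-p')^{T}$-type bounds, exceeding the single-turn value. Adaptivity matters here: a nonadaptive $T$-query policy is also a member of $\Pi_T$, but conditioning $q_t$ on $H_{t-1}$ can only help, by the same inclusion argument applied round by round.

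The main obstacle is the gap between this idealized Bayes-optimal claim and the learned policy with the compressed state $F(h_t)$. The summarization operator $S_t$ may discard information, and a finite-capacity $\pi_\theta$ need not use the extra evidence optimally. My first remedy is to state the theorem under the assumption that $F(h_t)$ is sufficient for $Y^\ast$ given $h_t$, i.e. $I(Y^\ast;F(h_t))=I(Y^\ast;h_t)$. I would then note that GRPO optimizes $J_T$ over a class containing the stop-at-round-one policies, so at a (local) optimum the multi-turn agent is no worse than single-turn. Because the statement is informal, I would also point, as for Proposition~1, to the empirical ablations over the number of rounds in the experimental sections as complementary support.
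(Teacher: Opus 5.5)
Your argument is sound at the rigor the statement allows, but it is organized differently from the paper's. The paper works only with a Bayes-optimal answerer. It treats the answer as a hidden $Y$ and notes that the posterior $\pi_t(y)=P(Y=y\mid H_t)$ is a martingale. It then applies Jensen to the concave potential $1-\max_y\pi_t(y)$, getting $\mathbb{E}[V(H_t)]\le\mathbb{E}[V(H_{t-1})]$, and telescopes this into $\mathbb{E}[A(H_t)]=1-\mathbb{E}[V(H_0)]+\sum_{s\le t}\Delta_s$ with $\Delta_s\ge 0$. Strictness for scattered evidence is argued only verbally. Your step ``Bayes error cannot increase when conditioning on more'' is this martingale/Jensen fact in compressed form, so that part coincides.

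What you add is the inclusion $\Pi_1\subseteq\Pi_T$ via the stop bit $c_t$. This compares \emph{best achievable} performance, so it also covers the choice of queries. The paper's argument does not: it holds the querying policy fixed and shows only that further rounds of that policy never hurt. Conversely, the paper's telescoping identity gives a per-round accounting of the gain that your inclusion argument lacks.

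On strictness your route is the more defensible one. The paper asserts the Jensen step is strict whenever $K_{q_t}(\cdot\mid Y)$ is information-bearing. But $1-\max_y p_y$ is piecewise linear, so equality holds whenever the observation cannot change the maximizing answer, e.g.\ a weakly informative observation under a confident prior. This happens even though, as your chain rule shows, $I(Y^\ast;H_t)$ strictly increases. Deriving strictness from an explicit coverage assumption avoids this flaw. To make it a proof, you should:
\begin{enumerate}
\item state the per-round assumption conditionally on $H_{t-1}$, since retrieval is a deterministic TopK given $q_t$ and the randomness lies in the queries;
\item bound the probability of collecting \emph{all} $k$ segments, not just the single-success probability $1-(1-p')^T$;
\item link coverage to accuracy with an assumption like $\alpha_1>\alpha_0$ from Proposition 4.1.
\end{enumerate}

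Two peripheral remarks should be dropped or repaired. First, Fano's inequality only lower-bounds the error by a decreasing function of $H(Y^\ast\mid H_T)$, so it does not give monotonicity of accuracy; the refinement argument is doing the work. Second, a local optimum of $J_T$ reached by GRPO need not be as good as the best single-turn policy, because local optimality says nothing about distant parameters. Only a global maximizer over the parametric class inherits the inclusion bound. The honest version of the theorem is therefore the idealized one (sufficient $F(h_t)$, Bayes-optimal answering via the true posterior rather than $\arg\max\pi_\theta$), which is also the regime the paper's proof implicitly assumes.
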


\begin{proof}
We provide experimental results in Sections~\ref{sec:main_results}--\ref{sec:generalization}. Theoretical justification is provided in Appendix~\ref{sec:prop_2}.
\end{proof}

\subsection{RL-Based Optimization}
We optimize the agent's policy using a double constraint reward based end-to-end reinforcement learning.

\textbf{Reward Function.} To enforce valid tool-calling traces and correct answers, we define a constrained reward at the trajectory level. The overall reward $R$ consists of two components: a format reward $R_{\text{fmt}}$ and a performance reward $R_{\text{perf}}$.
\par\hspace*{1em}(i) Format Reward. We enforce $\mathcal{M}$ to follow a predefined structured output format. The format reward is defined as:
\begin{equation}
R_{\text{fmt}} = \min\!\left(1.0,\; 0.5 \sum_{k=1}^{K} \mathbb{I}[t_k \text{ is matched}] \right),
\end{equation}
where $K$ denotes the number of required format tags, $t_k$ is the $k$-th tag, and $\mathbb{I}[\cdot]$ is the indicator function.

\par\hspace*{1em}(ii) Performance Reward. Let $\hat{y}$ denote the predicted final answer parsed from the answer tag, and let $y^\ast$ be the ground-truth answer. We use a binary exact-match reward:
\begin{equation}
R_{\text{perf}}=\mathbb{I}[\hat{y}=y^\ast],
\end{equation}
where $R_{\text{perf}}=1$ if the predicted answer exactly matches the ground truth, and $0$ otherwise.

\par\hspace*{1em}(iii) Gated Composition for Double Constrained Reward. $R$ combines $R_{\text{fmt}}$ and $R_{\text{perf}}$ with a simple gating rule. Specifically, we only credit the performance reward when the format reward reaches a minimum threshold:
\begin{equation}
R=
\begin{cases}
-1.0 + R_{\text{fmt}} + R_{\text{perf}}, & R_{\text{fmt}}\ge 0.5,\\
-1.0 + R_{\text{fmt}}, & \text{otherwise}.
\end{cases}
\end{equation}

\textbf{End-to-End Reinforcement Learning.} We adopt a GRPO-based objective to optimize $\pi_\theta$ with the $R$. Given a batch of $M$ sampled trajectories $\{\tau^{(i)}\}_{i=1}^{M}$, we standardize rewards within the batch to obtain a stable advantage signal. Let $R^{(i)}$ be the reward of trajectory $\tau^{(i)}$ and $\bar{R}$ be the mean reward in the batch. The standardized advantage is:
\begin{equation}
\hat{A}^{(i)}=
\frac{R^{(i)}-\bar{R}}
{\sqrt{\frac{1}{M}\sum_{j=1}^{M}(R^{(j)}-\bar{R})^2+\epsilon}},
\end{equation}
where in our implementation $\hat{A}^{(i)}$ is the standardized advantage and $\epsilon$ is a small constant for numerical stability.

The GRPO-based objective is thus given by:
\begin{equation}
\begin{aligned}
J_{\text{GRPO}}(\theta)
=
\mathbb{E}_{\tau\sim p_\theta(\tau)}\Bigg[
&\frac{1}{M}\sum_{i=1}^{M}\frac{1}{|\tau^{(i)}|}
\sum_{t=1}^{|\tau^{(i)}|}
\min\Big(
r_t^{(i)}\,\hat{A}^{(i)}, \\
&\qquad \mathrm{clip}(r_t^{(i)},1\pm\epsilon)\,\hat{A}^{(i)}
\Big) \\
&\;-\;\beta D_{\text{KL}}(\pi_\theta \,\|\, \pi_{\text{ref}})
\Bigg],
\end{aligned}
\end{equation}
where $w_t^{(i)}$ denotes the $t$-th token in trajectory $\tau^{(i)}$, and for simplicity, we further define the policy ratio is:
\begin{equation}
r_t^{(i)}=
\frac{\pi_\theta(w_t^{(i)}\mid \tau^{(i)}_{<t})}
{\pi_{\theta_{\text{old}}}(w_t^{(i)}\mid \tau^{(i)}_{<t})}.
\end{equation}

Here $\pi_{\theta_{\text{old}}}$ and $\pi_{\text{ref}}$ are the pre-update and reference policies. The $\mathrm{clip}(\cdot)$ operation restricts the policy ratio to $1\pm\epsilon$ to stabilize policy updates. The KL term $D_{\text{KL}}(\pi_\theta \,\|\, \pi_{\text{ref}})$ regularizes the policy toward $\pi_{\text{ref}}$, with $\beta$ controlling its strength.

\begin{proposition}
Reinforcement learning improves the agent's ability to solve tasks.
\end{proposition}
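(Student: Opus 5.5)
To make the statement precise, I measure ``ability to solve tasks'' by the expected performance reward
$$V(\theta)=\mathbb{E}_{(Q,X)}\,\mathbb{E}_{(\tau,y)\sim P_\theta(\cdot\mid Q,a_{\text{tmpl}},L)}\big[R_{\text{perf}}\big],$$
the probability of an exact-match answer. The claim then has two parts. First, under mild assumptions, a GRPO update on $J_{\text{GRPO}}$ does not decrease $V$ to first order, and increases it strictly whenever the group-normalized advantages are nondegenerate. Second, the gated reward $R$ makes maximizing $\mathbb{E}[R]$ consistent with maximizing $V$. As the earlier propositions did, I will also point to the empirical comparison against the untrained OmniLLM in the experiments, so what follows is the theoretical half.

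\textbf{Reward alignment.} I will show from the gated definition that $R$ is monotone in $R_{\text{perf}}$ once $R_{\text{fmt}}\ge 0.5$. I will also show that, for a fixed answer, every well-formatted trajectory earns at least as much reward as any malformed one. Malformed trajectories get $R\le -0.5$, since then $R_{\text{fmt}}<0.5$. Well-formatted trajectories get $R\ge -0.5$, and $R\ge 0.5$ if they are also correct. Consequently, any policy that maximizes $\mathbb{E}[R]$ puts full mass on formatted trajectories and, within that set, maximizes $\mathbb{E}[R_{\text{perf}}]$. So it maximizes $V$ restricted to parseable outputs, which is the only place $R_{\text{perf}}$ can be nonzero anyway.

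\textbf{Policy improvement.} I will linearize $J_{\text{GRPO}}$ at $\theta=\theta_{\text{old}}$. There $r_t^{(i)}=1$, clipping is inactive, and the KL gradient vanishes when $\pi_{\text{ref}}=\pi_{\theta_{\text{old}}}$, or is $O(\beta)$ otherwise. The gradient then reduces to $\frac1M\sum_i \hat A^{(i)}\frac{1}{|\tau^{(i)}|}\nabla\log P_\theta(\tau^{(i)})$. Because the baseline $\bar R$ does not depend on the action, this is an unbiased estimate of the policy gradient of $\mathbb{E}[R]$, up to two distortions: the positive rescaling by the group standard deviation and the per-token length normalization. To control the length weighting, I will assume the trajectory length is bounded by a maximum number of rounds and tokens; then every weight lies in $[1/T_{\max},1]$. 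A standard ascent lemma for $L$-smooth objectives then gives $\mathbb{E}[R](\theta_{k+1})\ge\mathbb{E}[R](\theta_k)+\eta c\|\nabla\|^2-O(\eta^2)$ for small step size $\eta$. Combining this with the alignment step shows that $V$ does not decrease in the well-formatted regime. Multi-turn tool calls only change the trajectory distribution through $\pi_\theta$, because the environment factors $L(o_t\mid\cdot)$ in $P_\theta$ do not depend on $\theta$. Therefore the score function involves only the policy terms, and the improvement argument goes through unchanged over whole trajectories.

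\textbf{Main obstacle.} The hardest step is handling the length normalization and the group-std rescaling rigorously. Together they mean the update direction is not exactly $\nabla\mathbb{E}[R]$, so the inner product with the true gradient is positive only under a condition. I would first try to bound the bias by assuming rewards and lengths are roughly independent, or by taking the cruder but honest route of replacing normalization with a fixed $T_{\max}$ and arguing positive correlation. If that fails, I would state the result for the unnormalized variant and treat the normalized one as an approximation. Separately, I will note that improvement is strict only when groups contain mixed rewards; when all rewards in a group are equal, $\hat A\equiv0$, and I would state this as the nondegeneracy assumption.
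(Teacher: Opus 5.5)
\textbf{Route.} Your approach differs from the paper's. The paper invokes the TRPO/PPO trust-region lower bound (improvement at least the surrogate advantage minus $C\,D_{\mathrm{KL}}(\pi_{\mathrm{new}}\|\pi_{\mathrm{old}})$), asserts that clipping plus the KL penalty realize it, and concludes that a larger $\mathbb{E}[R]$ improves format validity and correctness simultaneously. You instead linearize $J_{\text{GRPO}}$ at $\theta_{\text{old}}$ and apply an ascent lemma. That makes step size, smoothness, nondegenerate groups, and the $\theta$-independence of the environment factors explicit; the last point also needs the token sum in $J_{\text{GRPO}}$ restricted to policy-generated tokens.

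\textbf{First gap: from $\mathbb{E}[R]$ to $V$.} Your step ``combining this with the alignment step shows that $V$ does not decrease in the well-formatted regime'' fails. If $\hat y$ is parsed from the answer tag, as in the paper, then $R_{\text{fmt}}<0.5$ forces zero matched tags and hence $R_{\text{perf}}=0$. So the gate is vacuous and $\mathbb{E}[R]=-1+\mathbb{E}[R_{\text{fmt}}]+V$ exactly. A step $\eta\nabla\mathbb{E}[R]$ then changes $V$ to first order by $\eta\bigl(\|\nabla V\|^2+\langle\nabla V,\nabla\mathbb{E}[R_{\text{fmt}}]\rangle\bigr)$. This is negative as soon as $\langle\nabla V,\nabla\mathbb{E}[R_{\text{fmt}}]\rangle<-\|\nabla V\|^2$. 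Your dominance argument separates malformed trajectories ($R=-1$) from well-formatted ones, but it says nothing about the two well-formatted levels $R_{\text{fmt}}\in\{0.5,1\}$. For example, move mass $3\delta$ from $(R_{\text{fmt}},R_{\text{perf}})=(0.5,0)$ to $(1,0)$ and mass $\delta$ from $(1,1)$ to $(1,0)$. This raises $\mathbb{E}[R]$ by $0.5\delta$ and lowers $V$ by $\delta$, with every trajectory well-formatted. Statements about maximizers of $\mathbb{E}[R]$ do not transfer to individual steps, and they themselves need $R_{\text{fmt}}=1$ to be attainable jointly with a correct answer. You need one of the following:
\begin{itemize}
\item the saturated regime $R_{\text{fmt}}\equiv 1$, or $\Pr_\theta(R_{\text{fmt}}<1)$ small enough that $\nabla\mathbb{E}[R_{\text{fmt}}]$ is negligible;
\item an explicit assumption that the format and correctness gradients do not conflict.
\end{itemize}
The paper's last step makes the same leap, so it offers no fix here.

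\textbf{Second gap: the ascent inequality.} The bound $|\tau|^{-1}\in[1/T_{\max},1]$ is not what is needed, because positive per-sample weights do not yield $\langle\mathbb{E}g,\nabla\mathbb{E}[R]\rangle\ge c\|\nabla\mathbb{E}[R]\|^2$. Even without the std factor, $1/|\tau|$ breaks the baseline cancellation. The term $\mathbb{E}\bigl[\bar R\,|\tau^{(i)}|^{-1}\nabla\log p_\theta(\tau^{(i)})\bigr]$ contains $\tfrac{M-1}{M}\,\mathbb{E}[R]\,\nabla_\theta\mathbb{E}_\theta[|\tau|^{-1}]$, a length drift unrelated to correctness. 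The group std is random and correlated with $\tau^{(i)}$, so it is not a constant rescaling either.

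\textbf{A workable repair.} Work in the format-saturated regime, for one question with binary reward and no length factor. Let $s_Q=\Pr_\theta(R_{\text{perf}}=1\mid Q)$ and condition on the reward vector. Then $\mathbb{E}[\nabla\log p_\theta(\tau)\mid R=1]=\nabla s_Q/s_Q$ and $\mathbb{E}[\nabla\log p_\theta(\tau)\mid R=0]=-\nabla s_Q/(1-s_Q)$, while $\hat A^{(i)}$ has the sign of $R^{(i)}-\bar R$. Hence the expected GRPO direction is exactly $c_Q\nabla s_Q$ with $c_Q\ge 0$, strictly positive precisely under your mixed-group condition. Across questions with shared parameters you get $\sum_Q c_Q\nabla s_Q$ with unequal $c_Q$, whose inner product with $\nabla\sum_Q s_Q$ can be negative. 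So the aggregate claim needs a gradient non-conflict assumption, or should be stated for the reweighted objective. The paper's TRPO route does not avoid this: its assertion that clipping plus the KL penalty realizes the bound for this estimator is exactly the unproved step.
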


\begin{table*}[ht]
\caption{Comparison of the baselines and the proposed method on OmniVideoBench, \textbf{bold} means the best average performance method.}
\centering
\small
\renewcommand{\arraystretch}{1.1}
\setlength{\tabcolsep}{3pt}
\begin{tabular}{l c ccccccccc}
\toprule
\textbf{Method} & \textbf{Modality} &
\textbf{\makecell{Compare\\Attr}} &
\textbf{AudioBg} &
\textbf{Reasoning} &
\textbf{\makecell{Logic\\Ref}} &
\textbf{\makecell{Ego\\Spatial}} &
\textbf{Perception} &
\textbf{TimeTemp} &
\textbf{\makecell{Text\\Sense}} &
\textbf{Avg} \\
\midrule

\multicolumn{11}{c}{\textbf{\emph{Closed-source Models}}} \\
\midrule
GPT-5.1 & V & 22.22 & 27.27 & 20.83 & 26.09 & 29.17 & 19.67 & 4.00 & 40.62 & 24.51 \\
\midrule
Gemini 2.0-Flash & A+V & 22.22 & 36.36 & 25.00 & 30.43 & 26.76 & 29.51 & 16.00 & 31.25 & 27.34 \\
+ RAG & A+V & 44.44 & 36.36 & 12.50 & 43.48 & 28.17 & 24.59 & 20.00 & 46.88 & 29.69 \\
+ RAG + Agent & A+V & 22.22 & 36.36 & 16.67 & 34.78 & 29.58 & 29.51 & 24.00 & 46.88 & 30.47 \\
\midrule
Gemini 2.5-Flash & A+V & 22.22 & 18.18 & 25.00 & 34.78 & 35.21 & 31.15 & 4.00 & 31.25 & 28.52 \\
+ RAG & A+V & 11.11 & 36.36 & 20.83 & 30.43 & 33.80 & 36.07 & 12.00 & 53.12 & 32.42 \\
+ RAG + Agent & A+V & 44.44 & 27.27 & 29.17 & 30.43 & 33.80 & 34.43 & 28.00 & 53.12 & 35.16 \\
\midrule

\multicolumn{11}{c}{\textbf{\emph{Open-source Models}}} \\
\midrule
Qwen2.5-Omni-3B & A+V & 22.22 & 9.09 & 25.00 & 17.39 & 25.35 & 18.03 & 28.00 & 31.25 & 23.05 \\
+ RAG & A+V & 33.33 & 27.27 & 33.33 & 26.09 & 26.76 & 16.39 & 24.00 & 25.00 & 24.61 \\
+ RAG + Agent & A+V & 22.22 & 26.36 & 37.50 & 26.09 & 25.35 & 24.59 & 24.00 & 28.12 & 26.95 \\
+ RAG + Agent + RL & A+V & 33.33 & 27.27 & 33.33 & 21.74 & 26.76 & 21.31 & 36.00 & 31.25 & 27.34 \\
\midrule
Qwen2.5-Omni-7B & A+V & 22.22 & 18.18 & 25.00 & 26.09 & 23.94 & 27.87 & 16.00 & 31.25 & 25.00 \\
+ RAG & A+V & 22.22 & 27.27 & 12.50 & 39.13 & 22.54 & 24.59 & 40.00 & 40.62 & 27.73 \\
+ RAG + Agent & A+V & 44.44 & 45.45 & 29.17 & 26.09 & 28.17 & 22.95 & 24.00 & 34.52 & 28.52 \\
+ RAG + Agent + RL & A+V & \textbf{33.33} & \textbf{27.27} & \textbf{12.50} & \textbf{26.09} & \textbf{32.39} & \textbf{34.43} & \textbf{20.00} & \textbf{37.50} & \textbf{29.69} \\
\midrule
Qwen3-Omni-30B & A+V & 33.33 & 18.18 & 12.50 & 26.09 & 32.39 & 32.79 & 12.00 & 34.38 & 27.73 \\
+ RAG & A+V & 11.11 & 36.36 & 8.33 & 26.09 & 38.03 & 26.23 & 20.00 & 34.38 & 28.12 \\
+ RAG + Agent & A+V & 33.33 & 36.36 & 16.67 & 21.74 & 38.03 & 27.87 & 28.00 & 25.00 & 29.30 \\
\bottomrule
\end{tabular}
\label{tab:videoomnibench}
\end{table*}

\begin{proof}
We provide empirical evidence in Section~\ref{sec:ablation_study}. Theoretical justification is provided in Appendix~\ref{sec:prop_3}.
\end{proof}

\begin{table*}[ht]
\caption{Comparison of the baselines and the proposed method on WorldSense, \textbf{bold} means the best average performance method.}
\centering
\small
\renewcommand{\arraystretch}{1.1}
\setlength{\tabcolsep}{3pt}
\begin{tabular}{l c ccccccccc}
\toprule
\textbf{Method} & \textbf{Modality} &
\textbf{\makecell{Tech \&\\Science}} &
\textbf{\makecell{Culture \&\\Politics}} &
\textbf{\makecell{Daily\\Life}} &
\textbf{\makecell{Film \&\\TV}} &
\textbf{\makecell{Perfor-\\mance}} &
\textbf{Games} &
\textbf{Sports} &
\textbf{Music} &
\textbf{Avg} \\
\midrule

\multicolumn{11}{c}{\textbf{\emph{Closed-source Models}}} \\
\midrule
Gemini 2.5-Flash & A+V & 47.37 & 23.08 & 27.78 & 40.00 & 27.27 & 29.41 & 25.00 & 43.24 & 33.59 \\
+ RAG & A+V & 55.26 & 38.46 & 29.63 & 36.67 & 31.82 & 41.18 & 21.88 & 45.95 & 37.50 \\
+ RAG + Agent & A+V & 57.89 & 30.77 & 42.59 & 33.33 & 31.82 & 41.18 & 25.00 & 45.95 & 39.84 \\
\midrule

\multicolumn{11}{c}{\textbf{\emph{Open-source Models}}} \\
\midrule
Qwen2.5-Omni-3B & A+V & 34.21 & 42.32 & 27.78 & 30.00 & 13.64 & 29.41 & 15.62 & 40.54 & 29.69 \\
+ RAG & A+V & 34.21 & 26.92 & 25.93 & 30.00 & 31.82 & 41.18 & 31.25 & 35.14 & 31.25 \\
+ RAG + Agent & A+V & 36.84 & 26.92 & 31.48 & 26.67 & 27.27 & 29.41 & 40.62 & 45.95 & 33.98 \\
+ RAG + Agent + RL & A+V & 36.84 & 30.77 & 40.74 & 40.00 & 36.36 & 35.29 & 15.62 & 51.35 & 36.71 \\
\midrule
Qwen2.5-Omni-7B & A+V & 34.21 & 30.77 & 29.63 & 33.33 & 22.73 & 29.41 & 18.75 & 40.54 & 30.47 \\
+ RAG & A+V & 21.05 & 42.31 & 37.04 & 40.00 & 13.64 & 29.41 & 25.00 & 45.95 & 32.81 \\
+ RAG + Agent & A+V & 31.58 & 50.00 & 25.93 & 30.00 & 50.00 & 35.29 & 21.88 & 43.24 & 34.38 \\
+ RAG + Agent + RL & A+V & \textbf{44.74} & \textbf{23.08} & \textbf{40.74} & \textbf{40.00} & \textbf{40.91} & \textbf{41.18} & \textbf{25.00} & \textbf{45.95} & \textbf{38.28} \\
\bottomrule
\end{tabular}
\label{tab:worldsense}
\end{table*}

\begin{table*}[ht]
\caption{Comparison of the baselines and the proposed method on DailyOmni, \textbf{bold} means the best average performance method.}
\centering
\small
\renewcommand{\arraystretch}{1.1}
\setlength{\tabcolsep}{3pt}
\begin{tabular}{l c ccccccc}
\toprule
\textbf{Method} & \textbf{Modality} &
\textbf{\makecell{AV Event\\Alignment}} &
\textbf{Comparative} &
\textbf{\makecell{Context\\Understanding}} &
\textbf{\makecell{Event\\Sequence}} &
\textbf{Inference} &
\textbf{Reasoning} &
\textbf{Avg} \\
\midrule

\multicolumn{9}{c}{\textbf{\emph{Closed-source Models}}} \\
\midrule
Gemini 2.5-Flash & A+V & 34.63 & 34.38 & 50.00 & 37.10 & 56.82 & 51.11 & 44.53 \\
+ RAG & A+V & 38.46 & 39.39 & 45.65 & 41.94 & 54.55 & 55.56 & 46.48 \\
+ RAG + Agent & A+V & 26.92 & 34.38 & 46.65 & 53.23 & 59.09 & 60.00 & 48.83 \\
\midrule

\multicolumn{9}{c}{\textbf{\emph{Open-source Models}}} \\
\midrule
Qwen2.5-Omni-3B & A+V & 39.22 & 21.43 & 41.46 & 25.76 & 33.33 & 29.73 & 32.03 \\
+ RAG & A+V & 27.45 & 28.57 & 60.98 & 22.73 & 48.48 & 37.84 & 35.94 \\
+ RAG + Agent & A+V & 37.25 & 32.14 & 41.46 & 25.76 & 42.42 & 51.35 & 37.11 \\
+ RAG + Agent + RL & A+V & 39.22 & 28.57 & 41.46 & 40.54 & 36.36 & 51.35 & 40.09 \\
\midrule
Qwen2.5-Omni-7B & A+V & 29.41 & 28.57 & 41.46 & 51.52 & 21.21 & 32.43 & 36.33 \\
+ RAG & A+V & 37.25 & 35.71 & 46.34 & 33.33 & 42.42 & 48.65 & 39.84 \\
+ RAG + Agent & A+V & 45.10 & 39.29 & 56.10 & 33.33 & 48.48 & 35.14 & 42.19 \\
+ RAG + Agent + RL & A+V & \textbf{27.45} & \textbf{32.14} & \textbf{51.22} & \textbf{59.09} & \textbf{45.45} & \textbf{45.95} & \textbf{44.92} \\
\bottomrule
\end{tabular}
\label{tab:dailyomni}
\end{table*}

\section{Experiments}
\begin{figure}[!t]
    \centering 
    \includegraphics[width=\columnwidth]{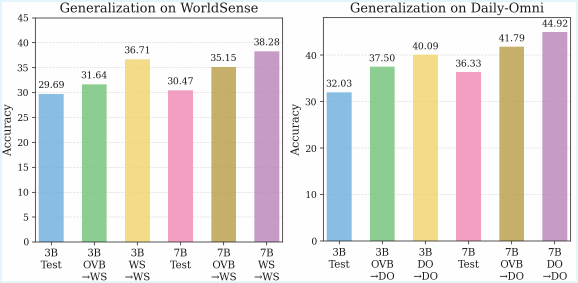}
    \caption{Generalization results across benchmarks. \textbf{OVB} = \textbf{OmniVideoBench}, \textbf{WS} = \textbf{WorldSense}, \textbf{DO} = \textbf{Daily-Omni}. \textbf{“→”} means train on the dataset before the arrow and test on the dataset after the arrow.}
    \label{fig:generalize}
\end{figure}
In this section, we present the experimental setup and results of OmniRAG-Agent. We aim to answer the following research questions (RQs): \textbf{RQ1:} Does OmniRAG-Agent consistently outperform existing OmniLLMs under low-resource long audio-video settings? \textbf{RQ2:} How well does OmniRAG-Agent generalize across different long-horizon omnimodal QA benchmarks? \textbf{RQ3:} Is OmniRAG-Agent transferable across different backbone models? \textbf{RQ4:} How effective are the key components in improving tool use and final answer quality? \textbf{RQ5:} How does the budget affect the overall performance of OmniRAG-Agent?

\begin{figure*}[htbp]
  \centering
  \includegraphics[width=1.0\textwidth]{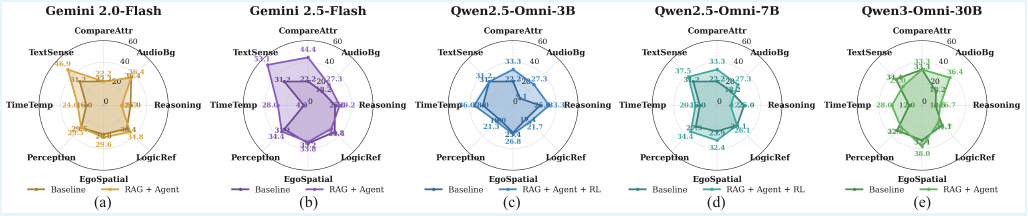}
  \caption{Transferability across different OmniLLM backbones on OmniVideoBench. OmniRAG-Agent is applied to five representative OmniLLMs, including two closed-source models and three open-source models.}
  \label{fig:different_models}
\end{figure*}

\begin{figure*}[htbp]
  \centering
  \includegraphics[width=1.0\textwidth]{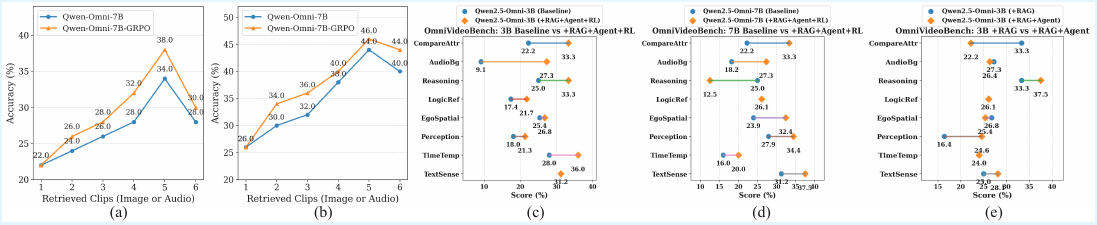}
  \caption{Budget analysis on 50 samples from OmniVideoBench. We vary the retrieval budget and compare with and without RL.}
  \label{fig:budget}
\end{figure*}

\subsection{Experimental Setup}
\textbf{Datasets.} We evaluate OmniRAG-Agent on three long-horizon omnimodal question answering benchmarks: \textbf{OmniVideoBench}~\cite{li2025omnivideobench}, \textbf{WorldSense}~\cite{hong2025worldsense}, and \textbf{Daily-Omni}~\cite{zhou2025daily}. These datasets require reasoning over long audio-video streams~\cite{cao2025videominer} where key evidence is scattered across time. OmniVideoBench provides fine-grained ability-based subsets, allowing us to analyze where retrieval and multi-turn tool use bring the most benefit. WorldSense and DailyOmni further test the robustness of the method~\cite{cao2025xgc,zhang2025deep,li2024baichuan} on diverse real-world scenarios with multimodal clues from both what is seen and what is heard. More details are in Appendix~\ref{sec:datasets}.

\textbf{Baselines.} We compare OmniRAG-Agent against representative closed-source and open-source OmniLLMs under low-resource settings. For closed-source models, we report results on \textbf{GPT-5.1}~\cite{ma2026safety} and \textbf{Gemini}~\cite{comanici2025gemini} variants. For open-source models, we evaluate \textbf{Qwen2.5-Omni}~\cite{xu2025qwen2,xu2025qwen3omnitechnicalreport} backbones. To isolate the effect of each component, we additionally include incremental variants built on the same backbone: \emph{(i)} \textbf{Base} (direct long-horizon QA without external retrieval), \emph{(ii)} \textbf{Base}+\textbf{RAG} (single-step retrieval-augmented QA), \emph{(iii)} \textbf{Base}+\textbf{RAG}+\textbf{Agent} (multi-turn planning and tool calling), and \emph{(iv)} \textbf{Base}+\textbf{RAG}+\textbf{Agent}+\textbf{RL} (RL-based~\cite{dong2025agentic} optimization). More details are in Appendix~\ref{sec:baselines}.

\textbf{Evaluation Metrics.} We follow the official evaluation protocols of each benchmark. For these datasets, we report accuracy on each ability subset and their average score, which measures overall performance across heterogeneous reasoning skills.

\textbf{Implementation Details.} We implement OmniRAG-Agent with an image--audio retrieval environment consisting of an image bank and an ASR-indexed audio bank. The image bank is constructed by sampling frames from the original video at a fixed interval, and the audio bank is built from time-stamped ASR segments. At inference time, the agent iteratively plans, issues retrieval queries, and integrates retrieved evidence. For RL training, we optimize the agent policy using GRPO with a gated reward that jointly enforces valid tool-calling traces and improves final answer quality. More details are in Appendix~\ref{sec:tools} and Appendix~\ref{sec:hyperparameters}.

\subsection{Main Results (RQ1)}
\label{sec:main_results}
As shown in Tables~\ref{tab:videoomnibench}--\ref{tab:dailyomni}, OmniRAG-Agent consistently improves the corresponding baselines across all three benchmarks under the same low-resource setting, with gains spanning multiple ability dimensions, including Reasoning~\cite{feng2025video}, Perception, etc. We take Table~\ref{tab:videoomnibench} as an example. For open-source backbones, incorporating external evidence retrieval, multi-turn agentic reasoning and RL consistently improves long-horizon audio--visual question answering over direct inference.

\subsection{Generalization Results (RQ2)}
\label{sec:generalization}
As shown in Figure~\ref{fig:generalize}, OmniRAG-Agent generalizes well when we move across benchmarks. In particular, models that are trained on OmniVideoBench can be directly transferred to WorldSense or Daily-Omni and still get a clear boost over zero-shot testing, which suggests the learned retrieval-and-reasoning~\cite{guan2025deeprag} behavior is not tied to a single dataset style. The gains are generally more noticeable for the smaller backbone, while the larger backbone also benefits but shows a mild diminishing-return pattern since it already starts from a stronger baseline.

\subsection{Work with Other OmniLLMs (RQ3)}
We also test how well OmniRAG-Agent works with different OmniLLM backbones by plugging it into five representative models on OmniVideoBench. As shown in Figure~\ref{fig:different_models}, adding OmniRAG-Agent consistently pushes the performance curves outward across most ability dimensions. Across models, the largest improvements tend to appear on abilities that rely more on fine-grained grounding and cross-modal evidence, such as reasoning, temporal understanding, and text-related sense~\cite{wang2023internvid}, indicating that OmniRAG-Agent enhances how different OmniLLMs use evidence rather than changing their core strengths.

\subsection{Ablation study (RQ4)}
\label{sec:ablation_study}
We run ablations to evaluate the contribution of each module. The overall trend is quite consistent across the three benchmarks in Tables~\ref{tab:videoomnibench}--\ref{tab:dailyomni}. With the same backbone, adding RAG is usually where we see the first obvious improvement, likely because the model can pull in short, relevant snippets. Adding the agent loop on top usually gives an extra lift, especially for skills that rely on multi-turn reasoning~\cite{xue2025simpletir} and retrieval. For the open-source models, GRPO often adds a smaller but still worthwhile boost.

\subsection{Budget Analysis (RQ5)}
We study RQ5 in Figure~\ref{fig:budget} by evaluating how the retrieval budget~\cite{gan2025retrieval} (number of retrieved clips) and whether we enable multi-turn retrieval and RL training affect OmniRAG-Agent. From (a)(b), we find that using a moderate number of retrieved audio-video clips yields the best trade-off, reaching strong accuracy while avoiding an overly long context. From (c)(d)(e), both the multi-turn retrieval mechanism and RL consistently push scores upward across ability dimensions, suggesting that increasing inference and training budget to a extent improves performance.

\section{Conclusion}
In this work, we propose OmniRAG-Agent for low-resource long-horizon audio-video QA by enabling an OmniLLM to interact with multi-modal banks through retrieval and multi-turn tool use. We find three key takeaways from our framework: (i) fine-grained evidence can be retrieved under tight budgets, (ii) multi-step agentic planning helps gather and verify scattered clues across long inputs, and (iii) reinforcement learning can further improve evidence selection and final answering in an end-to-end manner.
% In the unusual situation where you want a paper to appear in the
% references without citing it in the main text, use \nocite
\section*{Impact Statement}
This work introduces OmniRAG-Agent, an agentic framework for low-resource long-horizon audio-video question answering that combines multimodal retrieval, multi-turn tool use, and reinforcement learning to improve evidence grounding and reasoning accuracy. While the method shows consistent gains, it still faces limitations such as reliance on retrieval quality, possible multi-turn error accumulation, and added computation from iterative interaction. Future work includes developing more robust stopping and verification strategies, extending to broader multimodal settings, and improving transfer to real-world domains.

\nocite{langley00}

\bibliography{example_paper}
\bibliographystyle{icml2026}

%%%%%%%%%%%%%%%%%%%%%%%%%%%%%%%%%%%%%%%%%%%%%%%%%%%%%%%%%%%%%%%%%%%%%%%%%%%%%%%
%%%%%%%%%%%%%%%%%%%%%%%%%%%%%%%%%%%%%%%%%%%%%%%%%%%%%%%%%%%%%%%%%%%%%%%%%%%%%%%
% APPENDIX
%%%%%%%%%%%%%%%%%%%%%%%%%%%%%%%%%%%%%%%%%%%%%%%%%%%%%%%%%%%%%%%%%%%%%%%%%%%%%%%
%%%%%%%%%%%%%%%%%%%%%%%%%%%%%%%%%%%%%%%%%%%%%%%%%%%%%%%%%%%%%%%%%%%%%%%%%%%%%%%
\newpage
\appendix
\onecolumn
\section{Prompts Used in OmniRAG-Agent}
\subsection{Initial Prompt}

Figure~\ref{fig:init_prompt} illustrates the initial prompt used in the OmniRAG-Agent interaction process. This prompt specifies a strict agentic protocol for budgeted long-horizon audio-video QA, where the model iteratively alternates between internal reasoning and a single tool/answer decision at each turn. It enforces a structured XML-like~\cite{wei2022chain} output format with two tags per step (\texttt{\textless think\textgreater} followed by exactly one of \texttt{\textless search\_image\textgreater}, \texttt{\textless search\_audio\textgreater}, or \texttt{\textless answer\textgreater}), and restricts tool queries to natural-language retrieval requests over video frames or audio clips only.

\begin{figure*}[htbp]
  \centering
  \includegraphics[width=1.0\textwidth]{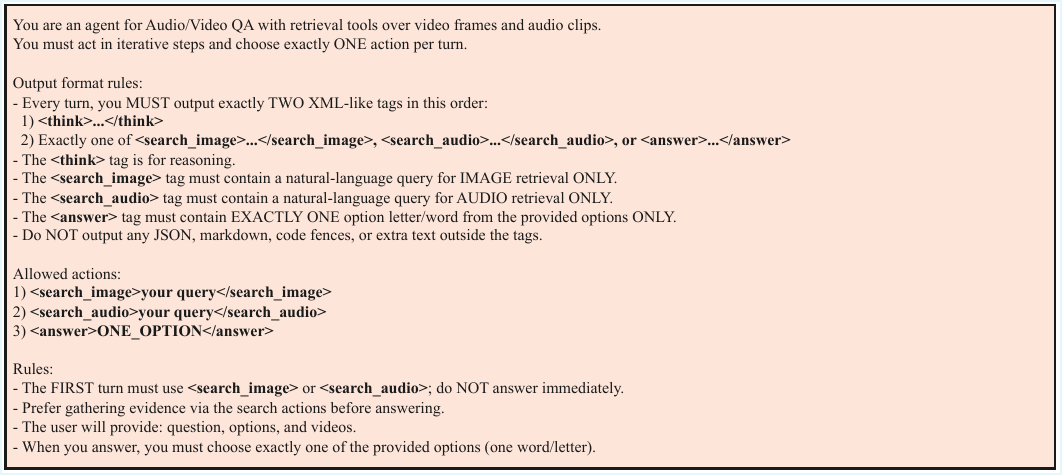}
  \caption{Initial prompt of agent process.}
  \label{fig:init_prompt}
\end{figure*}

\subsection{Multi-turn Interaction Process}
Figure~\ref{fig:multi_turn_process} shows a complete multi-turn interaction example of OmniRAG-Agent for a given question. The figure illustrates how the agent iteratively performs retrieval-augmented reasoning using a Thought--Action--Observation style prompt~\cite{yao2023tree,besta2024graph,qiao2024autoact}: at each step, the model reasons about the current context and evidence summary (Thought), selects a retrieval tool and issues a natural-language query to either the image bank or the audio bank (Action), and then incorporates the returned time-localized snippets and metadata as new evidence (Observation).

\begin{figure*}[htbp]
  \centering
  \includegraphics[width=1.0\textwidth]{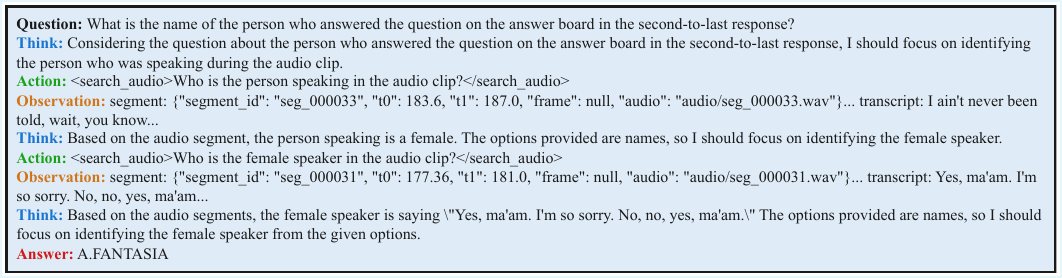}
  \caption{An example of a complete agent process.}
  \label{fig:multi_turn_process}
\end{figure*}

\section{Proof}
\subsection{Proof of Proposition 4.1}
\label{sec:prop_1}
\begin{proposition}
Multi-modal RAG improves the agent's ability to solve problems.
\end{proposition}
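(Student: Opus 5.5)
The statement is informal, so the first step is to pin it down. Let $Y$ denote the ground-truth answer to $Q$. Then ``ability to solve problems'' becomes the Bayes-optimal accuracy $\mathrm{Acc}^\ast(C)=\mathbb{E}[\max_y P(Y=y\mid C)]$ of an ideal reader given context $C$. The claim is then that $\mathrm{Acc}^\ast(\tilde X, E) \ge \mathrm{Acc}^\ast(\tilde X)$, where $E=E^{img}\cup E^{aud}$ is the retrieved evidence. The inequality is strict whenever $E$ carries information about $Y$ not already contained in $\tilde X$. A companion statement in information-theoretic form is
\[
I(Y;\tilde X,E)=I(Y;\tilde X)+I(Y;E\mid \tilde X)\ \ge\ I(Y;\tilde X),
\]
obtained from the chain rule and nonnegativity of conditional mutual information.

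\textbf{Weak inequality.} This follows from a monotonicity argument. Any decision rule $g(\tilde X)$ is also a rule of $(\tilde X,E)$ that simply ignores $E$. The maximum of accuracy over the larger class of rules therefore dominates. Equivalently, for each realisation one can apply Jensen's inequality to the convex map $p\mapsto\max_y p_y$:
\[
\max_y P(y\mid\tilde X)=\max_y \mathbb{E}[P(y\mid\tilde X,E)\mid\tilde X]\le \mathbb{E}[\max_y P(y\mid \tilde X,E)\mid\tilde X].
\]
Taking expectations gives the claim.

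\textbf{Strict improvement.} For this I will use the construction in the Multi-Modal Bank Construction paragraph. The compressed stream is $\tilde X=\mathcal D_t(X)$, a deterministic function of $X$. The banks $B^{img}$ (frames at interval $\Delta$) and $B^{aud}=\mathrm{ASR}(X)$ are also functions of $X$, and they are typically at much finer resolution than $\tilde X$. I will model the question as depending on a short critical segment $S\subset[0,T_X]$. Downsampling discards $S$ with positive probability $p_{\text{miss}}$. I will then assume a retrieval-recall condition: the TopK sets $E_t^{img}$ and $E_t^{aud}$ contain an item overlapping $S$ with probability $\rho>0$. Under these assumptions, the accuracy gap is at least about $p_{\text{miss}}\,\rho\,(1-\text{chance rate})$, so the improvement is strictly positive.

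\textbf{Main obstacle.} The hardest step is bridging from the ideal Bayes reader to the actual OmniLLM $\pi_\theta$. A real model can be distracted by irrelevant retrieved snippets, so ``more context helps'' is not automatic. My first attempt would be to add an explicit assumption that the reader is approximately calibrated, i.e., its accuracy is within $\delta$ of Bayes-optimal for both contexts. This yields the improvement whenever the information gain above exceeds $2\delta$. The empirical tables (the +RAG rows) would then be cited as evidence that this regime holds in practice.
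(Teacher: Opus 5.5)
Your route differs from the paper's. The paper never uses an ideal reader. It conditions the agent's own success probability on the event $G$ that some retrieved $E_t$ meets a set $E^\star$ of necessary snippets. It postulates $P(G)=r>0$ and $\alpha_1=P(\hat Y=Y^\star\mid G)>\alpha_0=P(\hat Y=Y^\star\mid\neg G)$, and asserts that the no-retrieval baseline satisfies $\mathbb{E}[\mathrm{Acc}_{\mathrm{Base}}]\le\alpha_0$. The law of total probability then gives the gain $r(\alpha_1-\alpha_0)$. The ``multi-modal'' part is handled separately: the union of the two channels has hit probability $1-(1-r_{img})(1-r_{aud})\ge\max(r_{img},r_{aud})$.

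You instead split the claim into ``information cannot hurt'' (an assumption-free theorem for the Bayes reader) and ``the model exploits the information'' (an explicit near-optimality hypothesis). This is more transparent. The paper's step $\mathbb{E}[\mathrm{Acc}_{\mathrm{Base}}]\le\alpha_0$ quietly assumes that an unsuccessful retrieval never leaves the agent worse off than the base model, which is exactly the distraction issue you isolate. Your $p_{\text{miss}}\rho(1-\text{chance rate})$ plays the role of $r(\alpha_1-\alpha_0)$, with $p_{\text{miss}}\rho$ restricting $r$ to hits that matter because downsampling lost the evidence. You do not address image versus audio. Your monotonicity argument gives $\mathrm{Acc}^\ast(\tilde X,E^{img},E^{aud})\ge\max\{\mathrm{Acc}^\ast(\tilde X,E^{img}),\mathrm{Acc}^\ast(\tilde X,E^{aud})\}$ at no cost, and without the paper's ``not adversarially coupled'' proviso, so add that line.

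One claim in your setup is false and must go. The inequality is \emph{not} strict merely because $E$ carries information about $Y$ beyond $\tilde X$ (i.e.\ $I(Y;E\mid\tilde X)>0$). The map $p\mapsto\max_y p_y$ is convex but piecewise linear. So Jensen is an equality whenever one answer stays a posterior maximiser for almost every realisation of $E$ given $\tilde X$. For instance, take $\tilde X$ uninformative, $P(Y=1)=0.9$, and binary $E$ with $P(E=1\mid Y=1)=0.6$ and $P(E=1\mid Y=0)=0.4$. The posterior $P(Y=1\mid E)$ is about $0.93$ or $0.86$, so the MAP answer never changes and $\mathrm{Acc}^\ast$ stays $0.9$, although $I(Y;E)>0$. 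Strictness needs $E$ to change the Bayes decision with positive probability; the paper avoids this by postulating $\alpha_1>\alpha_0$ directly. For the same reason, whatever you compare with $2\delta$ must be the Bayes-accuracy gap, not a mutual-information gain.

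Your segment model is what actually delivers strictness, but state its hypotheses:
\begin{itemize}
\item Since the queries $q_t$ are generated from $\tilde X$, $\rho$ must be the recall conditional on $\tilde X$ having missed $S$. An unconditional recall cannot simply be multiplied by $p_{\text{miss}}$.
\item A hit must pin down $Y$.
\item $\tilde X$ must be uninformative about $Y$ on the miss event.
\end{itemize}
These are the counterparts of the paper's $r>0$ and $\alpha_1>\alpha_0$.

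Finally, any reader that sees only $\tilde X$ already has accuracy at most $\mathrm{Acc}^\ast(\tilde X)$. So near-optimality is needed only on the retrieval side, and the threshold is $\delta$ rather than $2\delta$. The property you need there is near-Bayes-optimality, not calibration: a model that always outputs the prior is calibrated.
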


\begin{proof}
Consider a long-horizon audio-video QA instance with question $Q$, compressed input $\tilde{X}$ (budget-limited), and the multimodal retrieval banks
$B=\{B^{img},B^{aud}\}$.
Let $Y^\star$ denote the ground-truth answer and $\hat{Y}$ the agent’s predicted answer.
We assume the question is evidence-sparse, i.e., there exists a small set of necessary evidence snippets $E^\star$ (potentially across modalities) such that accessing at least one element of $E^\star$ significantly increases the probability of answering correctly.

Let $G$ be the event that the RAG process retrieves at least one necessary evidence snippet into the context during inference:
\begin{equation}
G := \left\{\exists\, t \le T \text{ such that } E_t \cap E^\star \neq \emptyset \right\},
\qquad
P(G)=r>0.
\end{equation}

Define two conditional success probabilities:
\begin{equation}
\alpha_1 := P(\hat{Y}=Y^\star \mid G),
\qquad
\alpha_0 := P(\hat{Y}=Y^\star \mid \neg G),
\end{equation}
and assume $\alpha_1>\alpha_0$, i.e., seeing necessary evidence increases success probability.
Then, by the law of total probability,
\begin{equation}
\mathbb{E}\!\left[\mathrm{Acc}_{\mathrm{RAG}}\right]
= P(G)\,P(\hat{Y}=Y^\star \mid G) + P(\neg G)\,P(\hat{Y}=Y^\star \mid \neg G)
\ge r\alpha_1 + (1-r)\alpha_0.
\end{equation}

Under a tight budget where the baseline agent only observes $\tilde{X}$ without retrieval, the baseline is effectively constrained to the $\neg G$ regime (it has no mechanism to surface $E^\star$ from the raw long stream). Hence,
\begin{equation}
\mathbb{E}\!\left[\mathrm{Acc}_{\mathrm{Base}}\right] \le \alpha_0
\;\Rightarrow\;
\mathbb{E}\!\left[\mathrm{Acc}_{\mathrm{RAG}}\right]
-
\mathbb{E}\!\left[\mathrm{Acc}_{\mathrm{Base}}\right]
\ge r(\alpha_1-\alpha_0) > 0.
\end{equation}
This yields a positive lower bound on the expected accuracy gain induced by retrieval.

\paragraph{Multi-modal advantage.}
Suppose the necessary evidence decomposes into visual and audio components
$E^\star = E^\star_{img} \cup E^\star_{aud}$.
Let $r_{img}$ and $r_{aud}$ denote the probabilities that RAG retrieves at least one necessary snippet from the image bank and audio bank, respectively.
Assuming the two retrieval channels are not adversarially coupled, the probability of retrieving at least one necessary snippet from either modality satisfies
\begin{equation}
P(G_{mm})
=
1-(1-r_{img})(1-r_{aud})
\ge \max(r_{img},r_{aud}),
\end{equation}
so the above lower bound becomes larger when using the union $B^{img}\cup B^{aud}$, explaining why multi-modal RAG is particularly beneficial for cross-modal and temporally sparse evidence.

\textbf{Summary.}
Multi-modal RAG increases the probability of bringing necessary evidence into the agent’s context ($r>0$), and since correctness conditioned on having such evidence is higher ($\alpha_1>\alpha_0$), the expected task accuracy admits a strictly positive improvement lower bound $r(\alpha_1-\alpha_0)$.
\end{proof}

\subsection{Proof of Proposition 4.2}
\label{sec:prop_2}
\begin{proposition}
Multi-turn interaction improves the agent's ability to complete long-horizon tasks.
\end{proposition}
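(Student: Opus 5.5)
We prove this the same way as Proposition 4.1: a probabilistic coverage argument under explicit modeling assumptions. Consider an evidence-sparse instance whose answer needs $k\ge 1$ necessary snippets $E^\star=\{e_1,\dots,e_k\}$, possibly from different modalities and times. Let $G_T$ be the event that every $e_i$ has entered the history $H_T$ by round $T$. Assume, as before, that the success probability given full coverage, $\alpha_1:=P(\hat Y=Y^\star\mid G_T)$, is strictly larger than the success probability $\alpha_0$ without it. Then the law of total probability gives
\[
\mathbb{E}[\mathrm{Acc}_T]\ge P(G_T)\,\alpha_1+(1-P(G_T))\,\alpha_0 .
\]
So it suffices to show that $P(G_T)$ is nondecreasing in $T$, and strictly larger for the multi-turn agent than for single-step RAG ($T=1$).

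The key step is to model each round as a retrieval attempt conditioned on the current state $F(h_{t-1})$. We assume the following. If some necessary snippet is still missing after round $t-1$, the query $q_t$ retrieves at least one missing snippet with probability at least $p>0$. Snippets already in $H_{t-1}$ stay there, because $H_t=H_{t-1}\oplus(q_t,E_t)$ is append-only.

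Let $N_t$ be the number of covered snippets. It is nondecreasing, and while $N_t<k$ it increases by at least one with probability at least $p$ each round. By a geometric/negative-binomial comparison,
\[
P(G_T)\ge P\big(\mathrm{Bin}(T,p)\ge k\big).
\]
This bound is increasing in $T$ and tends to $1$ as $T\to\infty$.

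For single-step RAG, only one query of budget $K$ is issued. When the $e_i$ are separated in time and modality, a single text query (CLIP or ASR) retrieves all of them with some probability $r_1$, which can be small. In particular, if the snippets lie in different banks, one call to either $\textsc{RetrieveIMG}$ or $\textsc{RetrieveAUD}$ reaches only one bank, so $r_1=0$ whenever $k$ spans both modalities. Comparing the two cases gives
\[
\mathbb{E}[\mathrm{Acc}_{\mathrm{multi}}]-\mathbb{E}[\mathrm{Acc}_{\mathrm{single}}]\ge \big(P(G_T)-r_1\big)(\alpha_1-\alpha_0),
\]
and the right-hand side is positive once $T$ exceeds a threshold.

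The main obstacle is justifying the per-round progress probability $p$. Its value depends on the policy using $F(h_{t-1})$ to aim $q_t$ at missing evidence, rather than repeating queries it has already issued. I would handle this by stating it explicitly as an adaptivity assumption: the policy's query distribution puts mass at least $p$ on queries whose top-$K$ sets intersect $E^\star\setminus H_{t-1}$.

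Two further points need care. First, extra context can hurt accuracy, a risk raised by the budget analysis. I would treat this with a penalty $\delta_T$ on $\alpha_1$ and conclude improvement only for moderate $T$, consistent with Figure~\ref{fig:budget}. Second, the stopping rule $c_t$ can end the interaction before full coverage. Because the agent may stop early, this only weakens the bound and does not reverse its sign.
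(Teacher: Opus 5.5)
Your coverage argument holds under the assumptions you state, but it is not the paper's route. You follow the template of Proposition 4.1; the paper proves this proposition with a Bayesian value-of-information argument instead. The answer is a hidden $Y$ and observations are $o_t\sim K_{q_t}(\cdot\mid Y)$. The posterior $\pi_t(y)=P(Y=y\mid H_t)$ is then a martingale in the history filtration. Jensen's inequality on the concave Bayes-risk potential $1-\max_y p_y$ gives $\mathbb{E}[V(H_t)\mid\mathcal{F}_{t-1}]\le V(H_{t-1})$. Telescoping shows that expected Bayes accuracy is non-decreasing in the number of rounds, and strictness for scattered evidence is argued only informally, via some $\Delta_t>0$.

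The paper's route needs no assumption on retrieval quality or on the query policy; any adaptive query sequence works. Its costs are:
\begin{itemize}
\item it is about an idealized Bayes-optimal reader;
\item it gives only weak monotonicity, since strictness requires an observation to flip the MAP answer with positive probability, not merely to be correlated with $Y$;
\item it cannot represent context-length degradation, because information never hurts a Bayes-optimal agent.
\end{itemize}
Your route gives an explicit rate $P(\mathrm{Bin}(T,p)\ge k)$, a strict quantitative gap over one-shot RAG, and room for a $\delta_T$ penalty. The price is heavier modeling: the adaptivity constant $p$, and the premise that accuracy depends on the trajectory only through full coverage, with the \emph{same} $\alpha_1,\alpha_0$ for every $T$ and for the single-step baseline. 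State that premise explicitly. With $\alpha_1:=P(\hat Y=Y^\star\mid G_T)$ defined separately for each $T$, neither ``it suffices to show $P(G_T)$ is nondecreasing'' nor your difference bound follows from total probability alone.

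Your remark on early stopping is wrong as written: stopping can reverse the sign, not just weaken the bound. In the paper's Algorithm 1 the stop check comes before retrieval, so the agent can answer with no retrieved evidence at all, which is worse than single-step RAG. Moreover, if stopping can fire while evidence is still missing, it is absorbing. The $\mathrm{Bin}(T,p)$ domination then fails, and $P(G_T)$ need not tend to $1$ or exceed $r_1$. You have two fixes:
\begin{itemize}
\item assume the stop rule fires only after $E^\star\subseteq H_t$, which keeps your binomial bound; or
\item fold not-stopping into $p$ and settle for $P(G_T)\ge p^k$ for $T\ge k$, adding $p^k>r_1$ as an explicit hypothesis.
\end{itemize}
Similarly, $r_1=0$ for cross-modal evidence presumes the single-step baseline queries only one bank, which the paper never specifies. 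Your general bound does not need that claim.
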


\begin{proof}
Let the answer set be $\mathcal{A}=\{1,\ldots,M\}$ and assume the true answer is a hidden variable $Y\in\mathcal{A}$.
At interaction round $t$, the agent issues a retrieval action (query) $q_t$ based on the previous interaction history
\begin{equation}
H_{t-1}=\{(q_1,o_1),\ldots,(q_{t-1},o_{t-1})\},
\end{equation}
where $o_s$ denotes the retrieved observation (evidence) returned by the retrieval environment at round $s$.
The retrieval environment produces an observation conditioned on $Y$:
\begin{equation}
o_t \sim K_{q_t}(\cdot \mid Y),
\end{equation}
where $K_{q_t}(\cdot \mid Y)$ is the class-conditional observation law indexed by the query/action $q_t$.
The history is then updated as
\begin{equation}
H_t = H_{t-1}\oplus (q_t,o_t),
\end{equation}
where $\oplus$ denotes appending the ordered pair $(q_t,o_t)$ to the history.

Define the posterior vector and Bayes accuracy function as
\begin{equation}
\pi_t(y) \triangleq P(Y=y \mid H_t),
\qquad
A(H_t) \triangleq \max_{y\in\mathcal{A}} \pi_t(y),
\end{equation}
where $A(H_t)$ is the Bayes accuracy under $0$-$1$ loss.
Introduce the Bayes risk potential function to measure uncertainty:
\begin{equation}
V(H_t) \triangleq 1-A(H_t)=1-\max_{y\in\mathcal{A}}\pi_t(y),
\end{equation}
where smaller values indicate lower uncertainty.

\textbf{(i) Posterior martingale and expectation contraction.}
Let $\mathcal{F}_{t-1}=\sigma(H_{t-1})$ be the natural filtration generated by the interaction history.
By Bayes' rule, the posterior satisfies the martingale property
\begin{equation}
\mathbb{E}[\pi_t(y)\mid \mathcal{F}_{t-1}] = \pi_{t-1}(y), \qquad \forall y\in\mathcal{A}.
\end{equation}
Define a concave potential function over the probability simplex:
\begin{equation}
\phi(p) \triangleq 1-\max_{y} p_y, \qquad p\in\Delta^{M-1}.
\end{equation}
Since $\phi(\cdot)$ is concave and $\pi_t$ is a martingale, Jensen's inequality yields
\begin{equation}
\mathbb{E}[V(H_t)\mid \mathcal{F}_{t-1}] \le V(H_{t-1}),
\end{equation}
where the inequality is strict whenever the observation kernel $K_{q_t}(\cdot \mid Y)$ is information-bearing (i.e., not independent of $Y$).

\textbf{(ii) Monotone improvement over multiple turns.}
Taking unconditional expectation and iterating the above relation gives
\begin{equation}
\mathbb{E}[V(H_t)] \le \mathbb{E}[V(H_{t-1})] \le \cdots \le \mathbb{E}[V(H_0)].
\end{equation}
Define the one-step expected reduction of Bayes risk by
\begin{equation}
\Delta_t \triangleq \mathbb{E}\!\left[V(H_{t-1})-\mathbb{E}[V(H_t)\mid \mathcal{F}_{t-1}]\right] \ge 0.
\end{equation}
Then the expected Bayes risk after $t$ rounds satisfies
\begin{equation}
\mathbb{E}[V(H_t)] = \mathbb{E}[V(H_0)] - \sum_{s=1}^{t}\Delta_s,
\end{equation}
and substituting $A(H_t)=1-V(H_t)$ yields
\begin{equation}
\mathbb{E}[A(H_t)] = 1-\mathbb{E}[V(H_0)] + \sum_{s=1}^{t}\Delta_s.
\end{equation}
Therefore, under informative observations, multi-turn adaptive retrieval ensures that the expected Bayes accuracy is monotone non-decreasing with the number of interaction rounds.

\textbf{(iii) Strict advantage for scattered evidence.}
When the necessary evidence is distributed across multiple temporal segments/modalities, a single retrieval round may fail to surface sufficient information for disambiguation, resulting in $\Delta_t \approx 0$.
In contrast, multi-turn adaptive retrieval can condition future queries on the accumulated history $H_{t-1}$, producing additional informative observations and yielding $\Delta_t>0$ for some rounds.
Hence, multi-turn interaction strictly reduces the Bayes risk potential and achieves strictly higher expected accuracy than one-shot retrieval in such scattered-evidence settings.
\end{proof}

\subsection{Proof of Proposition 4.3}
\label{sec:prop_3}
\begin{proposition}
Reinforcement learning improves the agent's ability to solve tasks.
\end{proposition}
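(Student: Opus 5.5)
The plan is to read ``improves the agent's ability'' in the same idealized sense as the proofs of Propositions 4.1 and 4.2. The quantity that improves will be the expected task reward
\begin{equation*}
J(\theta)=\mathbb{E}_{(\tau,y)\sim P_\theta}[R(\tau,y)].
\end{equation*}
Since $R=-1+R_{\text{fmt}}+R_{\text{perf}}\cdot\mathbb{I}[R_{\text{fmt}}\ge 0.5]$, this is an affine function of the probability of a correct answer once the format is valid. The claim I will aim for is a local one: a small GRPO step does not decrease $J$, and it strictly increases $J$ whenever the group rewards are not all equal.

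\textbf{Step 1 (gradient).} Start from the joint distribution $P_\theta(\tau,y\mid Q,a_{\text{tmpl}},L)$ defined in the previous subsection. The environment factors $L(o_t\mid H_{t-1},q_t)$ do not depend on $\theta$. The policy-gradient identity therefore gives
\begin{equation*}
\nabla_\theta J=\mathbb{E}\Big[R\,\nabla_\theta\Big(\textstyle\sum_t\log\pi_\theta(a_t\mid F(h_{t-1}))+\log\pi_\theta(y\mid\cdot)\Big)\Big].
\end{equation*}
Because the score function has zero mean, $R$ can be replaced by $R-b$ for any baseline $b$ that is constant within the group.

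\textbf{Step 2 (GRPO direction).} At $\theta=\theta_{\text{old}}$ every ratio $r_t^{(i)}$ equals $1$, so the clip is inactive. The gradient of the KL term also vanishes when $\pi_{\text{ref}}=\pi_{\theta_{\text{old}}}$, or it is $O(\beta)$ in general. The gradient of $J_{\text{GRPO}}$ is then
\begin{equation*}
\frac{1}{M}\sum_i \hat A^{(i)}\frac{1}{|\tau^{(i)}|}\sum_t\nabla\log\pi_\theta\big(w_t^{(i)}\mid\tau^{(i)}_{<t}\big).
\end{equation*}
Here $\hat A^{(i)}=(R^{(i)}-\bar R)/\sigma$. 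Up to the positive factor $1/\sigma$ and the length normalization, this is a group-baselined estimator of $\nabla J$.

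\textbf{Step 3 (ascent).} Assume $J$ is $L$-smooth and take the step size $\eta$ small. Then
\begin{equation*}
J(\theta_{\text{new}})\ge J(\theta_{\text{old}})+\eta\langle\nabla J,g\rangle-\tfrac{L\eta^2}{2}\|g\|^2 .
\end{equation*}
In expectation $\langle\nabla J,\mathbb{E} g\rangle\ge c\|\nabla J\|^2>0$, so $J$ increases. The clipping keeps the update inside a trust region, and I would use this to argue that the step stays within the regime where the linearization is valid, in the spirit of PPO's monotonic-improvement bound.

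\textbf{Step 4 (interpretation).} The reward is gated, so increasing $J$ first drives $P(R_{\text{fmt}}\ge0.5)\to1$, which makes the agent produce valid tool calls. After that, the gradient pushes up $P(\hat y=y^\ast)$. Within a group the advantage is positive exactly on the correct trajectories. The update therefore raises the likelihood of the query sequences $q_t$ that surfaced the evidence $E^\star$, which increases the retrieval-success probability $r$ from Proposition 4.1 and the per-round reductions $\Delta_t$ from Proposition 4.2.

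\textbf{Main obstacle.} Step 2 is the hard part. The per-token length normalization $1/|\tau^{(i)}|$ and division by the data-dependent $\sigma$ make the GRPO gradient a biased estimate of $\nabla J$, so alignment $\langle\nabla J,\mathbb{E}g\rangle>0$ is not automatic. My first attempt would be to condition on the group's reward multiset, which makes $\sigma$ constant. I would then bound the length-weighting bias by assuming bounded trajectory lengths $T_{\min}\le|\tau|\le T_{\max}$, which yields a positive alignment constant $c$ depending on $T_{\min}/T_{\max}$. If that fails, I would fall back to the expected-value statement under sequence-level normalization.
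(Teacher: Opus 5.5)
Your route differs from the paper's. The paper never analyzes the GRPO gradient. It takes the regularized objective $\mathbb{E}[R]-\beta D_{\mathrm{KL}}(\pi_\theta\|\pi_{\mathrm{ref}})$ and invokes the TRPO/PPO bound $J(\pi_{\mathrm{new}})\ge J(\pi_{\mathrm{old}})+\mathbb{E}[A_{\pi_{\mathrm{old}}}]-C\,D_{\mathrm{KL}}(\pi_{\mathrm{new}}\|\pi_{\mathrm{old}})$. It then declares that clipping plus the KL penalty realize the trust region, and reads the format/accuracy conclusion off the gating. Your first-order argument is more concrete, but it makes the alignment $\langle\nabla J,\mathbb{E}g\rangle\ge c\|\nabla J\|^2$ in Step 3 carry the whole proof. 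Neither patch you propose establishes that alignment for the paper's algorithm and reward.

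Conditioning on the reward multiset does freeze $\sigma$, and for a binary reward with sequence-level normalization it works exactly. With $k$ successes out of $M$:
\[
\mathbb{E}[g\mid k]=\frac{k(M-k)}{M^{2}\sigma_k}\cdot\frac{\nabla p}{p(1-p)}.
\]
But the gated reward takes the five values $\{-1,-0.5,0,0.5,1\}$. Then $\mathbb{E}[g\mid\text{multiset}]=\frac{1}{M\sigma}\sum_v n_v(v-\bar R)\mu_v$, with $\mu_v=\mathbb{E}[\nabla\log P_\theta\mid R=v]$, is a different combination of the $\mu_v$ than $\nabla J=\sum_v p_v(v-\mathbb{E}R)\mu_v$. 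Only the \emph{unnormalized} average over multisets returns $\frac{M-1}{M}\nabla J$, and the $1/\sigma$ reweighting destroys this. For $M=2$ the normalized advantages are exactly $\pm1$, so GRPO ascends a rank transform of $R$, whose policy gradient can have negative inner product with $\nabla J$ near a stationary point of $J$.

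Bounding $T_{\min}\le|\tau|\le T_{\max}$ does not rescue the length weighting either. The length-normalized direction is the policy gradient of a different functional, essentially $\mathbb{E}[\hat A/|\tau|]$, and it is generically nonzero where $\nabla J=0$. Near such points its inner product with $\nabla J$ takes both signs, so no uniform $c>0$ depending only on $T_{\min}/T_{\max}$ exists.

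Three further steps need repair.
\begin{itemize}
\item The paper's $\pi_{\mathrm{ref}}$ is a fixed reference policy, not $\pi_{\theta_{\mathrm{old}}}$. So $\beta\nabla D_{\mathrm{KL}}$ does not vanish and can dominate when $\|\nabla J\|$ is small. You must either target the regularized objective, as the paper does, or assume $\|\nabla J\|$ dominates this term.
\item The ascent lemma with a stochastic direction only gives $\mathbb{E}[J(\theta_{\mathrm{new}})]>J(\theta_{\mathrm{old}})$ when $\nabla J\ne0$. It does not give strict increase for every group with unequal rewards, as your opening claim asserts.
\item In Step 4, neither the ordering ``format first, then correctness'' nor the increase of $r$ and $\Delta_t$ follows from a local step. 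The exact-match reward also credits correct answers obtained without retrieving $E^\star$, for example multiple-choice guesses.
\end{itemize}

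A sound version of your argument either assumes valid format, so the reward is binary, and uses sequence-level normalization, where the displayed identity gives exact alignment, or it states improvement only for the surrogate that GRPO actually ascends. The paper's trust-region appeal avoids these computations only by tacitly assuming $\mathbb{E}[A_{\pi_{\mathrm{old}}}]>0$, which is the same alignment issue in another form.
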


\begin{proof}
Let the input question be $Q$, and let the ground-truth answer be a random variable $Y^\star \in \mathcal{A}$, where $\mathcal{A}$ is the answer set.
Let the agent policy be $\pi_\theta$, which generates a multi-turn tool-calling trajectory
$\tau=\{(a_t,o_t)\}_{t=1}^{T}$, where $a_t$ contains the plan/query/stop decision and $o_t$ is the retrieved observation returned by the environment.
The final answer $\hat{Y}$ is produced conditioned on the terminal history $H_T$.
The joint distribution of the trajectory and final answer can be written as
\begin{equation}
P_\theta(\tau,\hat{Y}\mid Q)
=
p_\theta(\tau\mid Q)\,P(\hat{Y}\mid H_T,Q),
\end{equation}
where $p_\theta(\tau\mid Q)$ is the trajectory distribution induced by $\pi_\theta$ through multi-turn interaction.

We optimize $\pi_\theta$ using GRPO with a gated constrained reward and KL regularization.
Let the trajectory-level reward be
\begin{equation}
R(\tau,\hat{Y})
=
R_{\mathrm{fmt}}(\tau) + \mathbb{I}\!\left[R_{\mathrm{fmt}}(\tau)\ge \delta\right] R_{\mathrm{perf}}(\hat{Y},Y^\star),
\end{equation}
where $R_{\mathrm{fmt}}$ measures format validity, $R_{\mathrm{perf}}=\mathbb{I}[\hat{Y}=Y^\star]$ is the exact-match task reward, and $\delta$ is a fixed threshold.
Note that
\begin{equation}
\mathbb{E}[R_{\mathrm{perf}}] = \mathbb{P}(\hat{Y}=Y^\star),
\end{equation}
so maximizing the expected reward increases the success probability, while the gating term ensures that correctness is credited only for valid, parseable tool-use trajectories.

The KL-regularized GRPO objective can be expressed abstractly as
\begin{equation}
J(\pi_\theta)
=
\mathbb{E}_{\tau\sim p_\theta(\tau\mid Q)}\!\left[R(\tau,\hat{Y})\right]
-
\beta D_{\mathrm{KL}}(\pi_\theta \,\|\, \pi_{\mathrm{ref}}),
\end{equation}
where $\pi_{\mathrm{ref}}$ is a reference policy and $\beta>0$ controls the regularization strength.
The first term encourages policies that generate valid tool-calling trajectories and correct answers; the second term constrains the update to a trust region, preventing unstable distribution shifts.

To justify policy improvement, we appeal to the standard performance difference / trust-region analysis used in TRPO/PPO.
For a KL-constrained update, there exists a constant $C>0$ such that the updated policy satisfies the lower bound
\begin{equation}
J(\pi_{\mathrm{new}})
\;\ge\;
J(\pi_{\mathrm{old}})
+
\mathbb{E}_{d_{\pi_{\mathrm{new}}}}\!\left[A_{\pi_{\mathrm{old}}}\right]
-
C\cdot D_{\mathrm{KL}}(\pi_{\mathrm{new}}\| \pi_{\mathrm{old}}),
\end{equation}
where $A_{\pi_{\mathrm{old}}}$ is the advantage function under the old policy and $d_{\pi_{\mathrm{new}}}$ is the state visitation distribution of the new policy.
GRPO implements this trust-region principle via clipping of policy ratios together with the explicit KL penalty, so the update increases the expected return while keeping the policy shift bounded.

Finally, since the task reward is an exact-match indicator and is gated by the format validity,
an increase in $\mathbb{E}[R(\tau,\hat{Y})]$ implies simultaneous improvement in both (i) the probability of producing valid tool-calling trajectories and (ii) the probability of producing the correct final answer among valid trajectories.
This effect is particularly important for long-horizon settings, where the policy must learn multi-step decisions (querying, stopping, and evidence integration) and thus benefits more from stable policy improvement under KL-regularized GRPO.
Therefore, GRPO under constrained rewards improves the agent's long-horizon video-audio QA capability in expectation.
\end{proof}

\section{Algorithm Details}
\paragraph{Overview.}
OmniRAG-Agent is a budgeted long-horizon audio-video QA framework that equips an OmniLLM $\mathcal{M}$ with (i) an external multi-modal retrieval environment and (ii) an agentic multi-turn tool-calling loop, further optimized by end-to-end reinforcement learning.
Given a question $Q$ and a long audio-video stream $X$, OmniRAG-Agent first constructs a compressed stream $\tilde{X}$ under an input budget and builds a plug-and-play multimodal bank $B=\{B^{img},B^{aud}\}$ from the original $X$.
During reasoning, $\mathcal{M}$ iteratively generates a short plan, issues natural-language retrieval queries to image/audio tools, and integrates returned evidence snippets into its evolving history to answer $Q$.
The overall procedure contains three connected stages:
(A) initialization, where the agent is instantiated with the prompt template and the budgeted input $\tilde{X}$, and the retrieval environment is prepared from $X$;
(B) multi-turn interaction, where $\mathcal{M}$ alternates between planning, tool calling (\textsc{RetrieveIMG}/\textsc{RetrieveAUD}), evidence aggregation, and a stop/continue decision until termination;
(C) end-to-end reinforcement learning, where GRPO updates the agent policy using a gated reward that jointly evaluates tool-call format validity and final answer correctness, improving both tool use and long-horizon QA performance over time.

\algrenewcommand\algorithmicrequire{\textbf{Input:}}
\algrenewcommand\algorithmicensure{\textbf{Output:}}
\algnewcommand{\LineComment}[1]{\State \(\triangleright\)~#1}
\begin{algorithm*}[!t]
\caption{OmniRAG-Agent: Multi-turn Omnimodal Retrieval Interaction with End-to-End GRPO Optimization}
\label{alg:omnirag_agent}
\begin{algorithmic}[1]
\Require Question $Q$; long audio-video stream $X$; OmniLLM agent $\mathcal{M}$ with policy $\pi_\theta$;
tool set $U=\{\textsc{RetrieveIMG}(\cdot),\textsc{RetrieveAUD}(\cdot)\}$;
prompt template $a_{\text{tmpl}}$; max turns $T$; termination rule $\mathsf{Stop}(\cdot)$;
bank hyperparameters $(\Delta, K_{img}, K_{aud})$; reward threshold $\delta$;
GRPO hyperparameters (rollouts $N$, clip $\epsilon$, KL coef $\beta$); reference policy $\pi_{\text{ref}}$.
\Ensure Final answer $y$ (in inference); updated parameters $\theta$ (in training).

\LineComment{Stage A: Bank Construction and Agent Initialization}
\State $\tilde{X} \gets \mathcal{D}_t(X)$ \Comment{budgeted temporal downsampling}
\State $B^{img} \gets \{(x_i^{img}, m_i^{img})\}_{i=1}^{N} \;\leftarrow\; \textsc{SampleFrames}(X,\Delta)$
\State $B^{aud} \gets \{(x_j^{aud}, m_j^{aud})\}_{j=1}^{M} \;\leftarrow\; \textsc{ASR}(X)$
\State $B \gets \{B^{img}, B^{aud}\}$
\State $H_0 \gets [\, Q \oplus \tilde{X} \oplus a_{\text{tmpl}} \,]$; \hspace{0.6em} $F(h_0)\gets \textsc{InitSummary}(H_0)$

\LineComment{Stage B: Multi-turn Tool-Calling Interaction}
\For{$t \gets 1$ \textbf{to} $T$}
  \State $(z_t, q_t, c_t) \sim \pi_\theta(\cdot \mid F(h_{t-1}))$ \Comment{plan, query, stop/continue}
  \If{$\mathsf{Stop}(c_t, z_t, F(h_{t-1}))$}
    \State \textbf{break}
  \EndIf
  \If{$q_t$ targets image evidence}
    \State $E_t \gets \textsc{RetrieveIMG}(q_t, B^{img}, K_{img})$
  \Else
    \State $E_t \gets \textsc{RetrieveAUD}(q_t, B^{aud}, K_{aud})$
  \EndIf
  \State $H_t \gets H_{t-1} \oplus (z_t, q_t, E_t)$
  \State $F(h_t) \gets S_t(F(h_{t-1}), z_t, E_t)$ \Comment{compact history/evidence summary}
\EndFor
\State $y \gets \mathcal{M}(\cdot \mid Q, a_{\text{tmpl}}, H_t)$ \Comment{final answer conditioned on terminal history}

\LineComment{Stage C: End-to-End Reinforcement Learning (Training Only)}
\If{$\textsc{Training} = \textsc{True}$}
  \State Sample $N$ trajectories $\{\tau_i\}_{i=1}^{N}$ by repeating Stages A--B
  \For{$i \gets 1$ \textbf{to} $N$}
    \State $R_{\text{fmt}}^{(i)} \gets \textsc{FormatScore}(\tau_i)$ \Comment{tag match / parse validity}
    \State $R_{\text{perf}}^{(i)} \gets \textsc{AnswerScore}(\tau_i)$ \Comment{exact match to $y^\star$}
    \State $R^{(i)} \gets -1 + R_{\text{fmt}}^{(i)} + \mathbb{I}[R_{\text{fmt}}^{(i)} \ge \delta]\cdot R_{\text{perf}}^{(i)}$
  \EndFor
  \State Compute standardized advantages $\{\hat A^{(i)}\}$ from $\{R^{(i)}\}_{i=1}^{N}$
  \State Update $\theta$ with GRPO using ratio clipping $(1\pm \epsilon)$ and KL penalty $\beta D_{\mathrm{KL}}(\pi_\theta\Vert \pi_{\text{ref}})$
\EndIf

\end{algorithmic}
\end{algorithm*}

\paragraph{Training and Inference Flow.}
During training, OmniRAG-Agent runs all three stages (A$\rightarrow$B$\rightarrow$C): it constructs the multimodal banks, executes multi-turn retrieval-augmented interaction to produce a complete tool-calling trajectory, and then updates the policy using GRPO with standardized advantages and KL regularization.
During testing, it runs only (A$\rightarrow$B): bank construction and multi-turn interaction are performed with frozen parameters, after which the final answer is generated directly when the agent emits the stop decision or reaches the maximum number of turns.

\paragraph{Complexity Analysis.}
Let $|X|$ be the raw video duration (or total frames/audio length), and let the image sampling interval be $\Delta$ seconds.
Bank construction costs $\mathcal{O}(|X|/\Delta)$ for frame sampling and $\mathcal{O}(|X|)$ for ASR segmentation (implementation-dependent), and is done once per input.
Multi-turn interaction runs up to $T$ rounds, each containing one OmniLLM generation step and at most one retrieval call, yielding time complexity $\mathcal{O}(T)$ for agent steps plus retrieval scoring over the banks (typically $\mathcal{O}(N)$ or sublinear with ANN indexing, where $N$ is bank size).
Memory grows with the interaction history and retrieved snippets, and can be controlled via evidence windowing or the compact summarization state $F(h_t)$.
Reinforcement learning samples $N$ trajectories per update, each with up to $T$ steps, giving training complexity $\mathcal{O}(NT)$ (plus reward/advantage computation and KL terms).
Overall, training is $\mathcal{O}(NT+T)$ (plus one-time bank construction per sample), and inference is $\mathcal{O}(T)$ (plus retrieval cost).

\section{Dataset Details}
\label{sec:datasets}
\textbf{OmniVideoBench dataset}~\cite{li2025omnivideobench} is a large-scale benchmark specifically designed to evaluate long-horizon audio--visual reasoning capabilities of omni-modal large language models. It consists of high-quality question--answer pairs constructed from real-world videos with durations ranging from several seconds to tens of minutes, requiring models to jointly reason over visual content and complex audio signals such as speech, sound, and music. In our experiments, we use 504 examples for training and 256 examples for testing. The dataset covers a wide spectrum of reasoning types, including temporal understanding, spatial localization, causal inference, counting, and cross-modal alignment, making it particularly challenging under low-resource settings. Moreover, we follow the official question\_type taxonomy and further merge fine-grained OmniVideoBench categories into eight ability groups (e.g., \texttt{Reasoning} $\leftarrow$ \{\textit{causal reasoning}, \textit{hypothetical reasoning}\}, \texttt{Perception} $\leftarrow$ \{\textit{fine-grained perception}, \textit{counting}\}) using a fixed rule-based mapping (cf. \texttt{AGGREGATION\_RULES} in our evaluation script).

\textbf{WorldSense dataset}~\cite{hong2025worldsense} is a comprehensive benchmark designed to evaluate real-world omni-modal understanding, with a strong emphasis on tightly coupled audio--visual reasoning. It is constructed from diverse, synchronized audio--visual videos spanning multiple real-world domains, where neither visual nor audio information alone is sufficient to answer the questions correctly. The dataset includes a wide range of tasks that require models to integrate speech, environmental sounds, music, and visual cues to perform perception, understanding, and high-level reasoning. In our experiments, we use 504 examples for training and 256 examples for testing. Due to its focus on cross-modal dependency, fine-grained temporal alignment, and complex real-world scenarios, WorldSense poses significant challenges for omni-modal models under low-resource settings, making it well-suited for evaluating multi-turn retrieval and agentic reasoning frameworks.

\textbf{Daily-Omni dataset}~\cite{zhou2025daily} is an audio--visual question answering benchmark designed to evaluate multimodal models on temporally aligned cross-modal reasoning in real-world daily life scenarios. It comprises videos rich in both visual dynamics and diverse audio signals, including speech, music, and environmental sounds, requiring models to jointly process and align information across modalities over time. In our experiments, we use 504 examples for training and 256 examples for testing. The questions span multiple reasoning categories such as audio--visual event alignment, event sequencing, contextual understanding, inference, and comparative reasoning, explicitly emphasizing the necessity of precise temporal correspondence between audio and visual events. This dataset is particularly challenging for models that lack fine-grained temporal awareness or rely on unimodal cues, making it well suited for evaluating agent-based, multi-turn retrieval and reasoning approaches under low-resource settings.

\section{Query Tool Details}
\label{sec:tools}
We introduce two atomic retrieval tools to support low-budget long-horizon omnimodal QA: an image retrieval tool and an audio retrieval tool. These tools allow an OmniLLM to convert a natural language question into structured retrieval actions and iteratively collect fine-grained evidence from the image--audio banks. Each tool call returns a compact evidence package, which can be consumed by the model for subsequent multi-turn reasoning without loading the full raw audio--video stream.

\textbf{Image Retrieval Tool.} This tool is used to retrieve time-localized visual evidence from the image bank that is most relevant to the query. It takes \texttt{query}, \texttt{top\_k}, and \texttt{video\_id} as arguments. Given the query, the tool encodes it with a CLIP text encoder and performs nearest-neighbor search over a FAISS index~\cite{douze2025faiss} built from sampled video frames. To reduce redundancy and improve temporal coverage, we further aggregate frame-level hits into segment-level evidence by grouping results with the same \texttt{segment\_id} and keeping only the best-matching frame per segment. The returned observation is the top-$k$ ranked segments, where each segment contains \texttt{segment\_id}, timestamps (\texttt{t0}/\texttt{t1}), corresponding \texttt{frame}/\texttt{audio} paths, and a similarity score. For example, for the query ``where am I in relation to the red car?'', the tool returns several high-scoring segments whose timestamps localize the relevant frames for visual grounding.

\textbf{Audio Retrieval Tool.} This tool is used to retrieve speech-related evidence from an ASR-indexed audio bank. It takes \texttt{query}, \texttt{top\_k}, and \texttt{video\_id} as arguments. The tool embeds the query text and performs FAISS retrieval over the audio index constructed from time-stamped ASR segments, returning the top-$k$ most relevant audio segments. The observation includes \texttt{segment\_id}, timestamps (\texttt{t0}/\texttt{t1}), the \texttt{audio} path, the associated ASR \texttt{transcript}, and a similarity score. This transcript-based return format enables the OmniLLM to read the retrieved evidence directly, avoiding expensive loading of long audio streams. For instance, for a question such as ``when I say `Woohoo', where am I?'', the agent can first invoke audio retrieval to locate the segment and then optionally switch to image retrieval to ground the spatial relation visually.

\section{Data Processing Details}
\textbf{Video temporal downsampling (budgeted compression).}
To enable long-horizon reasoning under a limited input budget, we first apply a temporal compression step to each raw video.
The preprocessing pipeline measures the original video length and rescales the playback timeline so that the entire stream is mapped into a fixed shorter duration.
Both the visual sequence and the accompanying audio track are adjusted consistently, ensuring that the compressed video preserves global temporal structure while fitting within a constrained context window.
This produces a lightweight surrogate input that can be processed efficiently by the model during long-horizon question answering.

\textbf{Frame sampling (building the image bank).}
To recover fine-grained visual details lost during compression, we additionally construct an external image evidence bank.
The video is partitioned into uniform temporal segments, and representative frames are sampled at regular intervals across the full timeline.
Each sampled frame is stored together with its associated timestamp metadata, forming a time-indexed visual repository.
This design allows the agent to later retrieve short, localized visual evidence on demand, rather than encoding the full video densely.

\textbf{ASR extraction and audio bank indexing.}
In parallel, we build an audio evidence bank by extracting short audio snippets aligned with the same temporal segmentation.
Each snippet is transcribed into text using an automatic speech recognition module, yielding a sequence of time-stamped transcripts.
These transcripts are then embedded into a searchable representation space and organized into an indexed repository.
As a result, the agent can efficiently retrieve salient speech or sound evidence through natural-language queries, providing fine-grained audio grounding for multimodal reasoning.

\section{Baseline Details}
\label{sec:baselines}
We evaluate our approach against a set of strong multi-modal large language models (LLMs) as baselines, covering both closed-source and open-source omni-modal systems. These baselines represent state-of-the-art generalist models capable of reasoning over vision and language inputs without additional retrieval, agentic planning, or reinforcement learning enhancements.

\textbf{GPT-5}~\cite{ma2026safety} is the latest flagship multimodal model in the GPT series released by OpenAI, designed to balance intelligent reasoning and interactive communication across text and visual inputs. It introduces configurable reasoning effort modes and improved instruction following, enabling more reliable responses on both simple and complex tasks compared to its predecessors. GPT-5 supports multimodal understanding of text and images, and its architecture has been optimized for faster reasoning and enhanced contextual comprehension, making it a representative closed-source baseline for complex multi-modal question answering.

\textbf{Gemini 2.0-Flash}~\cite{team2023gemini} is a member of the Gemini family of highly capable multimodal large language models developed by Google DeepMind and Google Research, designed to support natively integrated understanding of text, images, audio, and video inputs. Compared to its predecessors in the Gemini series, Gemini 2.0-Flash offers enhanced multimodal reasoning quality while maintaining efficient inference speed, facilitating deeper cross-modal interaction and step-wise problem solving across complex inputs. 

\textbf{Gemini 2.5-Flash}~\cite{comanici2025gemini} is an advanced multimodal large language model in the Gemini 2.X family developed by Google DeepMind and Google Research, designed to push the frontier of reasoning, long-context processing, and native multimodal understanding. Building upon the foundation of earlier Gemini models, Gemini 2.5-Flash incorporates hybrid reasoning capabilities that balance high reasoning quality with efficient inference, allowing configurable "thinking" and speed-optimized operation. It supports integrated comprehension of text, images, audio, and video inputs and can process long contexts with rich cross-modal interactions, enabling it to handle complex multi-step tasks across diverse modalities. 

\textbf{Qwen2.5-Omni}~\cite{xu2025qwen2} is an end-to-end multimodal large language model developed by the Qwen team, designed to perceive and jointly reason over diverse modalities including text, images, audio, and video. It introduces architectural innovations such as block-wise processing for streaming multimodal inputs and a novel time-aligned multimodal positional embedding to synchronize audio and video representations, which facilitate coherent cross-modal understanding and long-context processing. Furthermore, Qwen2.5-Omni adopts a unified Thinker-Talker framework that decouples text generation and speech synthesis while enabling simultaneous outputs, yielding robust performance on multimodal benchmarks.

\textbf{Qwen3-Omni}~\cite{xu2025qwen3omnitechnicalreport} is a unified end-to-end multimodal foundation model developed by the Qwen team, designed to support integrated understanding and generation across text, images, audio, and video without sacrificing performance in any individual modality. For the first time, Qwen3-Omni achieves state-of-the-art results on a wide range of benchmarks in all four modalities, matching or exceeding the performance of same-sized single-modal counterparts while often surpassing strong closed-source models on audio and audio-visual tasks. It employs a Thinker–Talker mixture-of-experts architecture that decouples reasoning and real-time speech synthesis, enabling fluent multilingual interaction and low-latency responses, and supports long-context comprehension with rich cross-modal reasoning.

\section{Hyperparameter Settings}
\label{sec:hyperparameters}
The hyperparameter configuration of OmniRAG-Agent is divided into two main stages: reinforcement learning–based training and multi-turn retrieval-augmented inference. All hyperparameters are designed to support long-context~\cite{zhao2024longrag}, low-resource audio–visual question answering under a constrained interaction budget, with explicit limits on sequence length, retrieval scope, and multi-turn reasoning depth to ensure stable optimization and efficient inference.

\textbf{Training Hyperparameters.} During training, we adopt a GRPO-based optimization scheme with a batch size of 4. The maximum prompt length and response length are set to 28{,}672 and 4{,}096 tokens. Each episode is limited to at most 20 environment steps, with a history length of 20 turns, and 5 rollouts are sampled per training instance. For retrieval during training, the top-$k$ value is set to 3, with up to 3 image segments and 3 audio segments returned per query. The actor learning rate is set to $5\times10^{-7}$, with a PPO mini-batch size of 4 and a micro-batch size of 1 per GPU. A KL loss coefficient of 0.001 is used, while the entropy coefficient is set to 0. Training is conducted for a single epoch. To support long-context modeling, the maximum model length and maximum number of batched tokens are both set to 32{,}768.

\textbf{Inference Hyperparameters.} At inference time, we evaluate the multi-turn RAG-based agent under a constrained reasoning budget. The maximum number of interaction turns is set to 20. For each retrieval step, the top-$k$ value is increased to 5, and up to 3 evidence segments are attached to the model input to balance coverage and efficiency. The decoding temperature~\cite{sheng2025hybridflow,zhang2025agentrl} is set to 0.2, and the maximum number of newly generated tokens is limited to 512 to ensure stable and concise responses during long-horizon reasoning.

\section{Future Directions}
OmniRAG-Agent opens up several promising directions for future research and development. Below we highlight a few avenues that could further improve long-horizon audio--video question answering under low-resource constraints, expand applicability, and address current limitations.

\textbf{Stronger End-to-End Optimization for Tool Use and Evidence Selection.}
While we adopt GRPO to improve tool use and answer quality, future work could explore alternative preference-based and offline RL objectives (e.g., DPO-style optimization) to make training more stable and data-efficient. A key challenge is building high-quality preference pairs and trajectory-level supervision for multi-turn retrieval, including cases where partial evidence is correct but the final answer fails. Promising directions include automatic preference construction from evaluator models, uncertainty-aware labeling, and lightweight human-in-the-loop verification for hard examples.

\textbf{More Fine-Grained Multimodal Retrieval and Temporal Grounding.}
Our retrieval environment currently selects evidence via image/audio banks and top-$k$ search. Future work can improve temporal grounding by incorporating hierarchical retrieval (coarse-to-fine), segment proposal generation, redundancy control, and explicit coverage objectives across long timelines. Another direction is learning retrieval representations jointly with the agent (e.g., query-aware embedding adaptation) and introducing cross-modal constraints to better align audio events with visually corresponding segments, which is essential for audio--visual event alignment and causal reasoning.

\textbf{Richer Agent Memory, Planning, and Verification Loops.}
Multi-turn interaction helps gather scattered clues, but longer horizons require better memory management and verification. Future iterations could incorporate structured memory (e.g., timeline graphs, entity/event tables), dynamic summarization with correctness guarantees, and explicit self-verification routines that trigger follow-up retrieval when evidence is insufficient or contradictory. Additionally, planning could be improved by learning policies that allocate retrieval budget adaptively based on question difficulty, current uncertainty, and marginal value of additional evidence.

\textbf{Deployment-Oriented Efficiency and Robustness in Real-World Scenarios.}
To move toward real-world deployment, it is important to study robustness under noisy ASR, imperfect frame sampling, domain shift, and long-context failure modes. Future work could investigate error-aware retrieval (robust to transcript noise), lightweight compression of evidence, and calibrated stopping criteria that reduce unnecessary tool calls. Another practical direction is integrating privacy-preserving or on-device retrieval components and evaluating the system under strict latency and memory budgets.

\textbf{Extensions to Multilingual, Multimodal, and Multi-Agent Settings.}
Finally, OmniRAG-Agent can be extended to multilingual audio--video QA by improving cross-lingual retrieval and enabling mixed-language reasoning over speech and subtitles. Beyond single-agent reasoning, multi-agent collaboration (e.g., separate ``audio specialist'' and ``vision specialist'' agents) may improve exploration and verification for complex questions, while multimodal toolchains (e.g., OCR, speaker diarization, event detection) could further enrich the evidence space and broaden the framework's applicability.

%%%%%%%%%%%%%%%%%%%%%%%%%%%%%%%%%%%%%%%%%%%%%%%%%%%%%%%%%%%%%%%%%%%%%%%%%%%%%%%
%%%%%%%%%%%%%%%%%%%%%%%%%%%%%%%%%%%%%%%%%%%%%%%%%%%%%%%%%%%%%%%%%%%%%%%%%%%%%%%

\end{document}